\DeclareMathOperator*{\argmin}{arg\,min}
\DeclareMathOperator*{\argmax}{arg\,max}
\DeclareMathOperator*{\arginf}{arg\,inf}
\newcommand{\mname}{CIMAGE}
\gdef\@copyrightpermission{
 \begin{minipage}{0.3\columnwidth}
  \href{https://creativecommons.org/licenses/by/4.0/}{\includegraphics[width=0.90\textwidth]{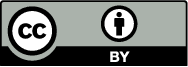}}
 \end{minipage}\hfill
 \begin{minipage}{0.7\columnwidth}
  \href{https://creativecommons.org/licenses/by/4.0/}{This work is licensed under a Creative Commons Attribution International 4.0 License.}
 \end{minipage}
 \vspace{5pt}
}
\begin{document}

\title{\texorpdfstring{\mname{}: Exploiting the Conditional Independence \\ in Masked Graph Auto-encoders}{\mname: Exploiting the Conditional Independence in Masked Graph Auto-encoders}}


\author{Jongwon Park}
\orcid{0009-0004-6627-4343}
\authornote{Equal contribution.}
\affiliation{
  \institution{Sungkyunkwan University}
  \city{Suwon}
  \country{Republic of Korea}
}
\email{jongwon208@skku.edu}

\author{Heesoo Jung}
\orcid{0000-0002-6554-2391}
\authornotemark[1]
\affiliation{
  \institution{Sungkyunkwan University}
  \city{Suwon}
  \country{Republic of Korea}
}
\email{steve305@skku.edu}

\author{Hogun Park}\authornote{Corresponding author.}
\affiliation{
  \institution{Sungkyunkwan University}
  \city{Suwon}
  \country{Republic of Korea}
}
\email{hogunpark@skku.edu}

\renewcommand{\shortauthors}{Jongwon Park, Heesoo Jung, and Hogun Park}

\begin{CCSXML}
<ccs2012>
   <concept>
       <concept_id>10010147.10010257.10010293.10010319</concept_id>
       <concept_desc>Computing methodologies~Learning latent representations</concept_desc>
       <concept_significance>500</concept_significance>
       </concept>
   <concept>
       <concept_id>10002951.10003227.10003351</concept_id>
       <concept_desc>Information systems~Data mining</concept_desc>
       <concept_significance>500</concept_significance>
       </concept>
 </ccs2012>
\end{CCSXML}

\ccsdesc[500]{Computing methodologies~Learning latent representations}
\ccsdesc[500]{Information systems~Data mining}

\keywords{Graph Neural Network, Self-supervised learning, Masked auto-encoder}




\begin{abstract}
Recent Self-Supervised Learning (SSL) methods encapsulating relational information via masking in Graph Neural Networks (GNNs) have shown promising performance. However, most existing approaches rely on random masking strategies in either feature or graph space, which may fail to capture task-relevant information fully. We posit that this limitation stems from an inability to achieve \textit{minimum redundancy} between masked and unmasked components while ensuring \textit{maximum relevance} of both to potential downstream tasks. Conditional Independence (CI) inherently satisfies the \textit{minimum redundancy and maximum relevance} criteria, but its application typically requires access to downstream labels. To address this challenge, we introduce \mname{}, a novel approach that leverages Conditional Independence to guide an effective masking strategy within the latent space. \mname{} utilizes CI-aware latent factor decomposition to generate two distinct contexts, leveraging high-confidence pseudo-labels derived from unsupervised graph clustering. In this framework, the pretext task involves reconstructing the masked second context solely from the information provided by the first context. Our theoretical analysis further supports the superiority of \mname{}'s novel CI-aware masking method by demonstrating that the learned embedding exhibits approximate linear separability, which enables accurate predictions for the downstream task. Comprehensive evaluations across diverse graph benchmarks illustrate the advantage of \mname{}, with notably higher average rankings on node classification and link prediction tasks.
Notably, our proposed model highlights the underexplored potential of CI in enhancing graph SSL methodologies and offers enriched insights for effective graph representation learning.
\end{abstract}


\maketitle

\section{INTRODUCTION}
Self-Supervised Learning (SSL) methods have emerged as a potent strategy to alleviate the need for a large amount of labeled data ~\cite{Chen20231, Wu2023, Chen20232}. Intensive research has been conducted in SSL, including masked modeling, from both theoretical and empirical perspectives, introducing numerous pretext tasks such as in-painting ~\cite{pathak2016context} and colorization ~\cite{Sordoni2021, Zhang2016}.  The framework of leveraging reconstruction-based pretext tasks has been successfully adapted to the graph domain, as highlighted in the comprehensive study by ~\cite{zhou2023comprehensive}, leading to the development of masked graph auto-encoders based on their empirical performance~\cite{MaskGAE, Hou2023, Li2023,graphbalance_www2025}. 

Although masked graph auto-encoders have demonstrated competitive performance among graph SSL baselines, the random masking method often results in sub-optimal representations for certain downstream tasks, particularly when the pretext task of reconstructing the masked portions is irrelevant to those tasks~\cite{ShiDTLL23}.
Effective masking strategies in SSL are generally designed to produce embeddings from masked input data that exhibit high Mutual Information (MI) with the downstream labels while maintaining low MI with the original input data~\cite{Tosh2021}. However, random masking alone may not consistently meet this requirement. An embedding that satisfies these conditions is considered to adhere to the minimum Redundancy and Maximum Relevance (mRMR) criterion ~\cite{Peng2005}. The criterion provides a theoretical foundation to maximize the relevance of the embedding for downstream tasks while minimizing unnecessary duplication of information from the input data. However, the design of the pretext task and the practical estimation of the mRMR in graph SSL remains underexplored.

To estimate the mRMR property more effectively, Conditional Independence (CI) has been explored in previous theoretical studies~\cite{Lee2021, Tosh2021}. These studies suggest that if a set of features can be decomposed into two subsets that satisfy the CI condition, then masking either subset constitutes a valid and effective masking strategy.
Despite its fundamental role in extracting information relevant to downstream tasks, CI is not directly addressed in many SSL studies~\cite{Tsai2021, Zhang2021} since the labeled data is mandated to satisfy the CI property~\cite{DBLP:journals/jstsp/ZaiemPEH22}.
In the graph domain, recent advancements in graph clustering have demonstrated the viability of treating pseudo-labels as potential downstream labels, evidenced by their superior performance across a range of downstream tasks~\cite{peng2022gate,DBLP:conf/aaai/YangLZWTZLFZ23,Graph-cluster-link}. 

In this paper, we propose a Conditional Independence Aware Masked Graph Auto-Encoder (\mname{}). This novel framework explicitly integrates the CI property using high-confidence pseudo-labels generated by unsupervised graph clustering algorithms. Our approach aims to advance the theoretical foundations of masking strategies in graph SSL. To identify subsets of encoded representations that satisfy the CI property, we decompose the graph representation into $K$ components.
This decomposition is guided by a Hilbert-Schmidt Independence Criterion-based score, which ensures minimum redundancy and maximum relevance. 

Central to our approach is a novel pretext task that involves masking and reconstructing contexts from one of the CI-compliant subsets. Additionally, we incorporate information on the graph's structural properties by reconstructing edges from the edge-masked graph. The \mname{} framework is optimized using three distinct loss functions: clustering loss, structure reconstruction loss, and latent factor reconstruction loss.

Our theoretical analysis further substantiates the effectiveness of \mname{}’s novel masking method by showing that the learned embeddings achieve approximate linear separability. This property enables accurate predictions in the downstream task, underscoring the robustness and effectiveness of our approach. An extensive experimental evaluation across seven leading benchmarks has been conducted to validate the effectiveness of our proposed framework. Notably, \mname{} has significantly improved node classification and link prediction tasks. 

\section{RELATED WORK}

\subsection{Self-Supervised Learning on Graphs}

Self-Supervised Learning (SSL) methods, including contrastive learning and reconstruction-based approaches, have been effectively applied to graphs, often employing Graph Neural Networks (GNNs) as encoders. Contrastive learning in GNNs generates augmented graph views and compares their representations to differentiate between positive and negative pairs, as illustrated in ~\cite{dgi} and ~\cite{infograph}. The success of reconstruction-based SSL in the image domain has significantly impacted the graph SSL, mainly through adopting a masking method that exploits the reconstruction capabilities of Graph Auto-Encoders (GAEs) ~\cite{Kipf2016}. Several studies have implemented random node feature masking as a pretext task for node feature reconstruction, moving beyond traditional edge reconstruction ~\cite{Li2023, Hou2023}. Recent advancements have introduced latent space masking to generate more generalized representations~\cite{ShiDTLL23}.
However, the widespread reliance on random masking in graph SSL presents significant challenges. First, solely depending on the random masking could alter the underlying semantics of graphs ~\cite{Lee2022, Jin2021}. Second, it fails to ensure minimum redundancy between the components of the pretext task to ensure the upper bound of SSL ~\cite{Tosh2021}.

\subsection{Conditional Independence in SSL}
Exploring Conditional Independence (CI) in self-supervised learning (SSL) provides a theoretical framework for understanding how pretext tasks can enhance representation learning, ultimately improving performance on downstream tasks. Despite its potential, the direct application of CI properties is often avoided due to the necessity for labeled data, which poses a challenge. Studies such as those by ~\cite{Tsai2021, Tosh2021} have implicitly incorporated CI principles, focusing on assumptions related to CI and redundancy within multi-view settings. Similarly, ~\cite{Teng2022} provides guarantees for the quality of representations learned through masked auto-encoder SSL techniques, while ~\cite{Lee2021} focuses on investigating broader and more practical scenarios, establishing error margins for SSL concerning CI.
Several previous studies have utilized the concept of CI in the context of graph SSL. For instance, ~\cite{xiao2022decoupled} tackles non-homophilous graphs by implicitly leveraging CI principles between a target node and its neighbors, conditioned on the input graph structure. However, due to the typical requirement for labels in CI evaluation, most existing methods assess learned representations at the evaluation phase without explicitly incorporating CI during the learning process.

\section{PRELIMINARY}
 The core challenge in \mname{} lies in effectively masking contexts generated by subsets of latent variables that meet specific Conditional Independence (CI) requirements. We employ two key methodologies to address this challenge: the minimum Redundancy Maximum Relevance (mRMR) and Hilbert-Schmidt Independence Criterion (HSIC). The mRMR approach, detailed in Section 3.1, facilitates the selection of latent variables to be masked by considering the CI between subsets of variables given the pseudo-label. The Hilbert-Schmidt Independence Criterion in Section 3.2 estimates mutual information between latent variables. 
 
\subsection{Minimum Redundancy Maximum Relevance}
The exploration of non-linear relationships between random variables has long been a topic of interest since the advent of the Least Absolute Shrinkage and Selection Operator (LASSO) methodology ~\cite{Tibshirani1996}. 
The criterion of minimum Redundancy Maximum Relevance (mRMR) ~\cite{Peng2005} promotes the selection of non-redundant features highly relevant to the output by elucidating such non-linear relations.
This criterion prioritizes the relevance between input and output and between different inputs.
The mRMR approach examines the independence of two discrete random variables, $X$ and $Y$
Therefore, independence can be assessed without knowledge of the distributions $P_{X}, P_{Y}, P_{XY}$. The mRMR method can be reformulated as an iterative process of selecting features, as outlined below:
\begin{equation}
\argmax_{\bm{f}_i \in X} I(c;\bm{f}_{i})-\frac{1}{\lvert S \rvert}\sum_{\bm{f}_{s} \in S}{I(\bm{f}_{s};\bm{f}_{i})},
\label{eq:mrmr}
\end{equation}
where $\bm{f}_{i}$ denotes each candidate feature, $S$ represents the set of selected features $\bm{f}_{s}$, $c$ stands for the class label, and $I$ is the mutual information measuring relevance. While Equation \eqref{eq:mrmr} is widely accepted and used, it does not consider class-relevant redundancy. To overcome this shortcoming and fully extract features that satisfy minimum redundancy, ~\cite{Lin2006} introduces the Conditional Informative Feature Extraction (CIFE) objective:
\begin{equation}
\vspace{0.5mm}
\argmax_{\bm{f}_{i}\in X}[I(c;\bm{f}_{i}) - \sum_{\bm{f}_{s}\in S}{I(\bm{f}_{s};\bm{f}_{i})} + \sum_{\bm{f}_{s}\in S} {I(\bm{f}_{s};\bm{f}_{i}\! \mid \! c)}].
\label{eq:CIFE}
\end{equation}

\subsection{Hilbert–Schmidt Independence Criterion}
One challenge of Equation ~\eqref{eq:CIFE} is that the mutual information between features is intractable.
The Hilbert-Schmidt Independence Criterion (HSIC) has emerged as an efficient and intuitive statistical tool to measure the independence between multi-dimensional random variables, where HSIC is 0 if and only if two random variables are independent. Consider $P_{\bm{x}\bm{y}}$ as a Borel probability measure on the domain $\mathbb{X} \times \mathbb{Y}$, where $\mathbb{X}$ and $\mathbb{Y}$ represent separable metric spaces.

Suppose there is an i.i.d sample of length $m$: $(\bm{X},\bm{Y}) = \{(x_i,y_i)\}^{m}_{i=1}$ drawn from $P_{xy}$ with $(x,x') \in \mathbb{X}$, and $(y,y') \in \mathbb{Y}$, where $x'$ and $y'$ denote independent copies of $x$ and $y$ respectively. There exists a linear feature map $\phi:\mathbb{X} \rightarrow \mathbb{V}$ with associated positive definite (p.d.) kernel $k: \mathbb{X} \times \mathbb{X} \rightarrow \mathbb{R}$ satisfying $k(x,x')= \langle \phi(x), \phi(x') \rangle_\mathbb{V}$. In the same manner, there exists $\varphi:\mathbb{Y} \rightarrow \mathbb{G}$ with corresponding p.d. kernel $l$, satisfying $l(y,y')= \langle \varphi(y), \varphi(y') \rangle_\mathbb{G}$. Consequently, the HSIC can be written as:
\begin{equation}
\begin{split}
&\operatorname{HSIC}(\bm{X},\bm{Y}) =\mathbb{E}_{xx'yy'}[k(x,x')l(y,y')]\\
&-2\mathbb{E}_{xy}\left[\mathbb{E}_{x'}[k(x,x')]\mathbb{E}_{y'}[l(y,y')]\right]  \\
&+\mathbb{E}_{xx'}[k(x,x')]\mathbb{E}_{y,y'}[l(y,y')].
\end{split}
\end{equation}
An empirical approximation method has been introduced to transform the HSIC into a practical measure for statistical independence testing ~\cite{Gretton2005}. Here, $\mathrm{Tr}$ denotes the trace of a matrix:
\begin{equation}
\widehat{\operatorname{HSIC}}(\bm{X},\bm{Y}) = \frac{1}{(n-1)^2}\mathrm{Tr}(k(x,x'),l(y,y')).
\label{eq:HSIC-trace}
\end{equation}

\section{METHODS}
\begin{figure*}[t]
\centering
\includegraphics[width=\textwidth]{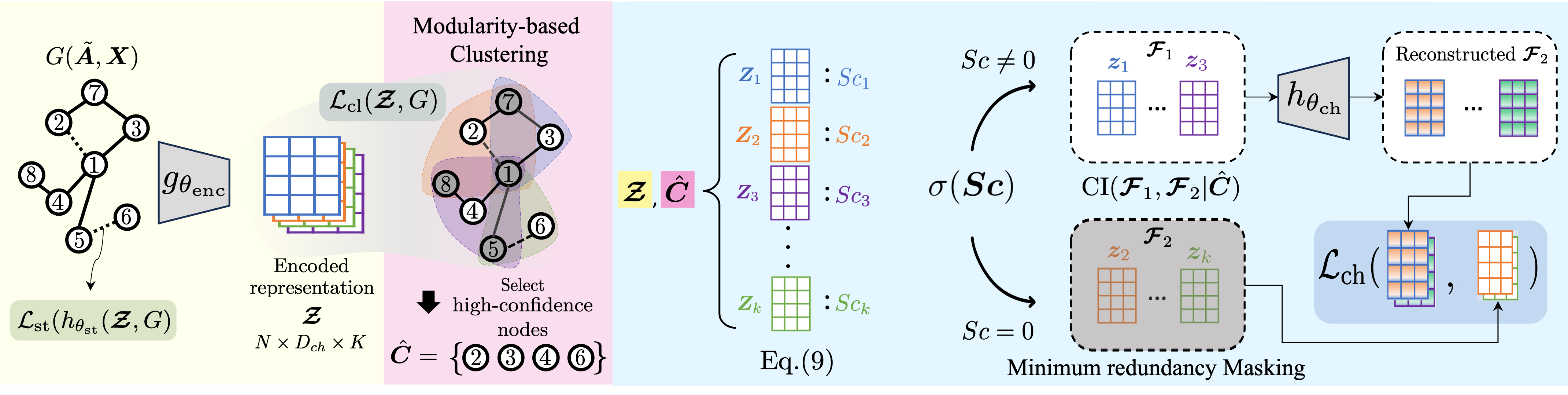}
\caption{The figure illustrates the overall architecture of \mname{}. It identifies \(K\) explanatory latent factors from the underlying structure of a masked input graph \(G(\tilde{\bm{A}}, \bm{X})\). The encoded representation extracts high-quality pseudo label $\hat{\bm{C}}$ by performing modularity-based clustering.
Each latent factor, represented by \(\bm{Z}_k\), is assigned with the scored $Sc_k$ according to Equation (\ref{eq:calculateSc}). The factors are then partitioned into two distinct contexts, \(\bm{\mathcal{F}}_1\) and \(\bm{\mathcal{F}}_2\). These contexts satisfy conditional independence to the given pseudo labels and serve as the input and target for the context reconstruction function \(h_{\theta_{\text{ch}}}\).}
\vspace{-2mm}
\label{fig:overall}
\end{figure*}

\subsection{Notations}
Throughout this section, we consider an undirected graph $G$=$(\tilde{\bm{A}},\bm{X})$. Here, \(\tilde{\bm{A}} \in \{0,1\}^{N \times N}\) denotes an initial edge-masked adjacency matrix (where $N$ is the number of nodes), and \(\bm{X} \in \mathbb{R}^{N \times D_{node}}\) represents node features matrix. $\mathcal{V}$ and $\mathcal{E}$ denote the sets of nodes and edges in $G$, respectively. Every node \(v \in \mathcal{V}\) possesses a \(D_{node}\)-dimensional feature vector \(\bm{x}_v\). We use \(K\) to denote the number of latent factors and \(D_{ch}\) to specify the dimension of a latent factor. The decomposed representation for conditional independence of each latent factor is denoted by \(\bm{Z}_{k} \in \mathbb{R}^{N \times D_{ch}}\), while the comprehensive encoded representation of the graph \(G\) is represented as \(\bm{\mathcal{Z}} \in \mathbb{R}^{N \times D_{ch} \times K}\). The conditional independence is satisfied with the pseudo-label $\hat{\bm{C}}$, which is acquired from graph-clustering.

\subsection{Overview}
Figure \ref{fig:overall} shows that \mname{} employs a CI-aware auto-encoder architecture. The encoder, denoted as \(g_{\theta_{\text{enc}}}\), generates the encoded representation $\bm{\mathcal{Z}}$, which is utilized by two primary components: 
a modularity-based clustering and a minimum redundancy masking module. The modularity-based clustering loss partitions the input graph to produce high-confidence pseudo-labels. Subsequently, the minimum redundancy masking module leverages these pseudo-labels and the encoded representation to mask a subset of latent factors that satisfy minimum redundancy. To be specific, the encoded representation $\bm{\mathcal{Z}}$ is partitioned into two sub-tensors, \(\bm{\mathcal{Z}}=[\bm{\mathcal{F}_1}\! \mid \! \bm{\mathcal{F}_2}]\), such that they satisfy $\text{CI}(\bm{\mathcal{F}_1}, \bm{\mathcal{F}_2}|\hat{\bm{C}})$. 
The CI-aware masking reconstruction loss uses the subset $\bm{\mathcal{F}_1}$ as the input, while $\bm{\mathcal{F}_2}$ serves as the target for the context reconstruction function. This approach ensures that masking $\bm{\mathcal{F}_2}$ corresponds to the minimal redundancy-aware masking strategy. Additionally, a structure reconstruction loss is employed to capture a deeper understanding of structural proximity information within the encoded representation.

The model uses hyper-parameters $\lambda_1$ and $\lambda_2$ to balance the three losses: the structure reconstruction loss $\mathcal{L}_\text{st}$ for reconstructing masked edges, the latent feature (channel) reconstruction loss $\mathcal{L}_\text{ch}$ for reconstructing masked context, and clustering loss $\mathcal{L}_{\text{cl}}$ to enhance the clustering performance respectively. The joint reconstruction in both the latent factor space and the raw data space simultaneously leverages topological similarity information and minimizes redundancy. The following equation is used to optimize \mname{}: 

\begin{equation}
\begin{split}
\theta_{\text{enc}}^\ast, \theta_{\text{st}}^\ast, \theta_{\text{ch}}^\ast 
&= \argmin_{\theta_{\text{enc}}, \theta_{\text{st}}, \theta_{\text{ch}}} \mathcal{L}_{\text{st}}(h_{\theta_{\text{st}}}(g_{\theta_{\text{enc}}}({G})), G) \\
&\quad + \lambda_1 \mathcal{L}_{\text{ch}}(h_{\theta_{\text{ch}}}(\bm{\mathcal{F}}_1),\bm{\mathcal{F}}_2) \\
&\quad + \lambda_2 \mathcal{L}_{\text{cl}}(g_{\theta_{\text{enc}}}({G}), G).
\end{split}
\end{equation}
The optimized encoder, denoted as $g_{\theta_\text{enc}^{\ast}}$, remains fixed and is subsequently employed for the downstream tasks.

\subsection{Encoder}
The prevalent approach in graph auto-encoders utilizing masking techniques \cite{MaskGAE, Hou2023, Tan2023, Li2023} typically employs conventional message propagation layers such as GAT \cite{gat}. While this strategy is feasible, it inherently works in a holistic manner and is unsuitable for dividing the latent factors to satisfy CI.
To overcome these limitations, our encoder architecture incorporates a decomposing algorithm that aims to uncover the underlying factors of the graph. For the experiment, we apply disentanglement \cite{sun2019vgraph, jianxin2019, li2022disentangled, li2021disentangled} for decomposing the graph. A CI score $Sc_k$ is then calculated with the decomposed representations to ensure the preservation of CI among context, a subset of latent factors.

\subsubsection{Minimum-Redundancy Masking Module.}
\label{subsubsec:mR-mask}
The primary objective is to assign a CI score, $Sc_k$, to each latent factor for minimal redundant masking. We calculate $Sc_k$ via Bayesian Lasso to ensure CI between context formed by factors with non-zero scores and the remaining context, given the pseudo-label $\hat{\bm{C}}$. 

Given $\tilde{\mathbf{A}}$ and $\mathbf{X}$, we apply disentanglement~\cite{jianxin2019} to produce $\mathbf{Z}_k$, where $\mathbf{Z}_k$ is one of the $K$ latent factors, $\mathbf{Z}_k = \mathbf{\mathcal{Z}}_{[:,:,k]}$. Each node features $\bm{x}_v$ is projected into $K$ distinct sub-spaces using MLP and denote it as $l_{v,k} \in \mathbb{R}^{D_{ch}}$. 
The approach we employ to acquire the disentangled representation can be written as:
\begin{equation}\label{eq:neigh}
\bm{Z}_{v,k}^{t} =\frac{\bm{l}_{v,k} + \sum_{u}{\text{softmax}(\bm{l}_{u,k}^{\hspace{1.0em}\mathsf{T}}\bm{Z}_{u,k}^{(t-1)})}\bm{l}_{u,k}}{\lVert \bm{l}_{v,k} + \sum_{u}{\text{softmax}(\bm{l}_{u,k}^{\hspace{1.0em}\mathsf{T}}\bm{Z}_{u,k}^{(t-1)})}\bm{l}_{u,k}\rVert_2},
\end{equation}
where $\bm{Z}_{v,k}^{0}=l_{v,k}$ sets the starting point for each node's latent factor representation, and $u$ is the neighbor of the node $v$.
The encoded representation $\bm{\mathcal{Z}}$ is achieved by stacking the outcomes $\bm{Z}_{v,k}^{t}$. Leveraging the encoded representation $\bm{Z}_{v,k}^{t}$ enables us to analyze the underlying elements that influence node interactions. 

The empirical estimation of HSIC is employed to measure independence at the simpler version of CIFE by ~\cite{gao2020} and solve the optimization problem relaxed by $\bm{\omega} \in \mathbb{R}^{D}_{+}$, where $D=D_{ch} \cdot K$. To clarify notation, $\beta$ is a regularization parameter, $\bm{f}_i \in \mathbb{R}^{N}$ represents a single feature of $\bm{Z}_{1:K} \in \mathbb{R}^{N \times D}$, and $S$ denotes the selected subset of features. By transforming mutual information in Eq.~\eqref{eq:CIFE} into HSIC, it is reformulated as:
\begin{equation}
\begin{aligned}
&\argmax_{\bm{\omega}} \sum_{i=1}^{D}{\omega_{i}\widehat{\operatorname{HSIC}}(\bm{f}_{i},\hat{\bm{C})}} \\
& - \sum_{i,j=1 }^{D}{\omega_{i}\omega_{j}[\widehat{\operatorname{HSIC}}(\bm{f}_i,\bm{f}_j)-\widehat{\operatorname{HSIC}}(\bm{f}_i,\bm{f}_j|\hat{\bm{C}})]} - \beta\lvert\bm{\omega}\rvert_1.
\label{eq:7}
\end{aligned}
\end{equation}
The first term promotes selecting channels ($f_i$) highly relevant to pseudo-labels, while the second minimizes redundancy, allowing some when strongly relevant to the target.


With $\widehat{\operatorname{HSIC}}(\bm{f}_i,\bm{f}_s\! \mid \! \hat{\bm{C}})$ regarded as a weighted mean that considers the number of samples for each pseudo-label class, 
the Conditional Mutual Information-HSIC (CMI-HSIC) can be articulated as follows:
\begin{equation}
\argmax_{\bm{\omega}} \sum_{i=1}^{D}{[\omega_i \widehat{\operatorname{HSIC}}(\bm{f}_i,\hat{\bm{C}})]}-(1-\pi)\sum_{i,j=1}^{D}{[\omega_i \omega_j \widehat{\operatorname{HSIC}}(\bm{f}_i,\bm{f}_j)]}- \beta\lvert\bm{\omega}\rvert_1,
\label{eq:CMIHSIC}
\end{equation}
where $\pi$ is the coefficient representing the weighted mean. 
The exact method for obtaining $\bm{\omega}$ is detailed in the Appendix.

The learned parameters $\omega_1^*, \omega_2^*, \ldots, \omega_{D}^*$ are averaged per factor by $\bar{\omega}_k^* = \frac{1}{D_{ch}}\sum^{kD_{ch}}_{(k-1)D_{ch}+1}{\omega_k^*}$. The normalized $\bar{\omega}^*_k$ through Equation (\ref{eq:calculateSc}) is used as $Sc_k$. $\operatorname{max}(\bm{\bar{\omega}^*})$ represents the maximum values among the averaged values $\bar{\omega}^*_k$.
\begin{equation}
Sc_k = \frac{\bar{\omega}^*_k}{\operatorname{max}(\bm{\bar{\omega}^*})}.
\label{eq:calculateSc}
\end{equation}

With each factor $k$ assigned its score $Sc_k$, applying the piecewise function $\sigma$ facilitates the precise classification of latent factors into distinct contexts. 

\[
\sigma(Sc_k) = 
\begin{cases} 
      \bm{\mathcal{F}}_1 & \text{if } Sc_k \text{ is non-zero} \\
      \bm{\mathcal{F}}_2 & \text{if } Sc_k \text{ is zero}
\end{cases}
\]

This process is crucial for delineating subsets of latent factors that satisfy CI. For every latent factor represented by vector $\bm{Z}_k$, the function $\sigma$ evaluates its associated score $Sc_k$. A latent factor with a non-zero score $Sc_k$ is classified into the first context denoted as $\bm{\mathcal{F}}_1$. Conversely, remaining latent factors, where $Sc_k$ corresponds to 0, are used to form an alternative context $\bm{\mathcal{F}}_2$. This binary classification mechanism effectively partitions (masks) the latent factors, thereby meeting the CI objective. Furthermore, $\beta$ in Eq.~\eqref{eq:7} facilitates the control of the number of zeros (constructing $\bm{\mathcal{F}}_2$) through L1 regularization.

\subsubsection{Clustering loss}

Conditional independence typically requires labels, which conflicts with the self-supervised learning approach. However, graph clustering, which effectively predicts clusters without relying on labels, has demonstrated outperforming performance. Consequently, we employ pseudo-labels derived from graph clustering~\cite{XZYLL23,MrabahBTK23}. Moreover, we refine our encoder to enhance the modularity property of a graph. Modularity~\cite{modularity}, denoted as $Q$, measures the difference between the fraction of the edges within the clusters and the expected fraction assuming random clusters and can be expressed as:
\begin{equation}
    Q = \frac{1}{2m}\sum_{ij}(\tilde{\bm{A}}_{ij} - \frac{d_i d_j}{2m}\delta(c_i, c_j)),
\end{equation}
where $d_i$ is the degree of node $i$ and $\delta(c_i, c_j)$ is the indicator function, $\delta(c_i, c_j)$ is 1 when the clusters of node $i$ and $j$ are identical and 0 otherwise, and $m$ is the total number of edges. Using the learned representation $\bm{Z}_{1:K}$, $\delta(c_i, c_j)$ can be approximated, which leads to our clustering loss as follows:
\begin{equation}
    \mathcal{L}_{\text{cl}} = -Q = -\frac{1}{2m}Tr((\tilde{\bm{A}} - \frac{d d^T}{2m})\bm{Z}_{1:K}\bm{Z}_{1:K}^T),
\end{equation}
where $\mathrm{Tr}(\bm{A})$ is the trace of matrix $\bm{A}$. If the mean value of the approximated indicator function for each node exceeds a high-confidence threshold of above 0.99, it is utilized as the pseudo-labeled node. Additionally, to ensure efficiency in large graphs, the approximated indicator function is generated using mini-batch matrix multiplication~\cite{MAGI}.

\subsection{Decoder}

Our approach utilizes two types of masks: the initial edge mask and the latent-factor mask. 
Although the initial edge mask is involved in the process, our primary focus is on the latent-factor mask. 
Consequently, our decoder is mainly designed to enhance the graph representation by integrating the contexts derived from latent factors. Together with the edge-masked reconstruction, these approaches mitigate the issue of information redundancy, which often hampers the efficacy of graph representation.


\subsubsection{Latent factor reconstruction loss.}
In Section~\ref{subsubsec:mR-mask}, a minimum redundancy masking module is proposed to extract encoded representation $\bm{\mathcal{Z}}$, which can be separated into two distinctive expressive representations $\bm{\mathcal{F}}_1, \bm{\mathcal{F}}_2$ based on sets of latent factors and corresponding CI scores. $\bm{\mathcal{F}}_1$ consists of latent factors with non-zero $Sc_k$ and is utilized as an input for the latent factor reconstruction function. The goal is to predict a masked representation $\bm{\mathcal{F}}_2$, using the remaining subset of latent factors $\bm{\mathcal{F}}_1$ that meet the CI criterion conditioned by pseudo-labels. Optimizing with the proposed loss function aligns with achieving the SSL objective of minimum redundancy. 


The latent factor reconstruction function, $h_{\theta_{\text{ch}}}$ employs a simple MLP structure and is trained using the scaled cosine error (SCE) loss function introduced by ~\cite{Lin2017}. It has been widely adopted in various works, including ~\cite{Nie2023, Hou2023}. 

\begin{equation}
\mathcal{L}_\text{ch} = \frac{1}{\lvert \mathcal{V} \rvert} \sum_{v \in \mathcal{V}}{(1-\frac{\bm{\mathcal{F}}_{2[v,:,:]}^{\hspace{1.0em}\mathsf{T}}h_{\theta_{\text{ch}}}(\bm{\mathcal{F}}_{1[v,:,:]})}{\lVert \bm{\mathcal{F}}_{2[v,:,:]} \rVert \cdot \lVert h_{\theta_{\text{ch}}}(\bm{\mathcal{F}}_{1[v,:,:]}). \rVert})^{\tau}},
\end{equation}
where $\tau$ is a hyper-parameter to scale the latent factor reconstruction loss function.
The SCE loss can be viewed as an adaptive sample re-weighting method to address challenges arising from the multi-dimensional and continuous nature of $\bm{\mathcal{F}}_2$. 

\subsubsection{Structure reconstruction loss.}
The structure reconstruction function $h_{\theta_{\text{st}}}$ is designed to reconstruct masked raw edge data, adhering to the fundamental design principles of GAEs. This process involves aggregating pairwise node representations to construct link representations, allowing a transformation of individual node information into a comprehensive understanding of their interactions. The corresponding edge reconstruction loss function, denoted as $\mathcal{L}_\text{st}$, is defined as follows: 
\begin{equation}
\begin{split}
&\mathcal{L}_\text{st}^+ = \frac{1}{\lvert \mathcal{E}^+ \rvert}\sum_{(u,v)\in\mathcal{E}^+}{\log h_{\theta_{\text{st}}} (\bm{\mathcal{Z}}_{u,:,:},\bm{\mathcal{Z}}_{v,:,:})}  \\
&\mathcal{L}_\text{st}^- = \frac{1}{\lvert \mathcal{E}^- \rvert}\sum_{(u',v') \in \mathcal{E}^-} {\log h_{\theta_{\text{st}}} (\bm{\mathcal{Z}}_{u^{\prime},:,:},\bm{\mathcal{Z}}_{v^{\prime},:,:})} \\
&\mathcal{L}_\text{st} = -(\mathcal{L}_\text{st}^{+} + \mathcal{L}_\text{st}^{-})
\end{split}
\end{equation}
$\mathcal{E}^+$ denotes the set of existing edges, while $\mathcal{E}^-$ signifies an equal number of negative edges set. The structure decoder $h_{\theta_{\text{st}}}$ is defined as below with $\odot$ denoting element-wise multiplication where high values correspond to a high probability of an edge between two nodes:
\begin{equation}
h_{\theta_{\text{st}}} (\bm{\mathcal{Z}}_{u,:,:},\bm{\mathcal{Z}}_{v,:,:})= \sigma(\operatorname{MLP}(\bm{\mathcal{Z}}_{u,:,:} \odot \bm{\mathcal{Z}}_{v,:,:})).
\end{equation}

\subsection{Theoretical Analysis}
\subsubsection{Extracting task-relevant information.}
Embeddings generated by SSL are considered effective when they achieve minimum redundancy. Suppose $\bm{\mathcal{F}}_1$ and $\bm{\mathcal{F}}_2$ represent two different views of the graph $G$.
Based on the conditional independence property, $\bm{\mathcal{F}}_1$ and $\bm{\mathcal{F}}_2$ are sufficient to represent the graph $G$ conditioned on the pseudo label.
Therefore, two representations are expected to conserve only the information required for downstream tasks, referred to as task-relevant information, which leads to $I(\bm{\mathcal{F}}_1;\hat{\bm{C}})=I(\bm{\mathcal{F}}_2;\hat{\bm{C}})$
Any shared information between these two views exceeding $I(G;\hat{\bm{C}})$ can be deemed task-irrelevant, hindering the downstream task performance. 
$\bm{\mathcal{F}}_2$ is a self-supervised signal due to the fact that the target signal of $\mathcal{L}_{\text{ch}}$ is $\bm{\mathcal{F}}_2$. 
With the assumption that the pseudo-label $\hat{\bm{C}}$ resembles true-label $\bm{C}$ and \(I(\bm{\mathcal{F}}_1; \bm{C} \! \mid \! \bm{\mathcal{F}}_2)\) has a marginal potential error, the maximum relevance of $\bm{\mathcal{F}}_1$ and $\bm{\mathcal{F}}_2$ with true label $\bm{C}$ can be easily inferred from the prior work~\cite{Tsai2021,xiao2022decoupled}.
Therefore, the learned contexts $\bm{\mathcal{F}}_1$ and $\bm{\mathcal{F}}_2$ satisfy maximum relevance to the true downstream label by optimizing $\mathcal{L}_{\text{ch}}$.

\subsubsection{Learning optimal latent factor reconstruction function.}
We provide Theorem~\ref{theorem:optimaldecoder} to demonstrate that we can learn optimal latent factor reconstruction function $h_{\theta_{\text{ch}}^\ast}$ for reconstructing masked $\bm{\mathcal{F}}_2$. 
\begin{restatable}{theorem}{theorema}
\textup{(Optimal approximation.)} If $\operatorname{HSIC}(\bm{\mathcal{F}}_1,\bm{\mathcal{F}}_2\! \mid \!\hat{\bm{C}})=0, dim(\bm{\mathcal{F}}_2) > \lvert\hat{\bm{C}}\rvert$, and  $rank(\mathbb{E}[\bm{\mathcal{F}}_2 \! \mid \! \hat{\bm{C}}=c])=\lvert \hat{\bm{C}} \rvert$, then downstream task labels are guaranteed to be linearly separable with 
optimal structural decoder $h^*_{\theta_{\text{ch}}}(\bm{\mathcal{F}}_1)$, and its complexity reduces to $\tilde{O}(dim(\hat{\bm{C}}))$.
\label{theorem:optimaldecoder}
\end{restatable}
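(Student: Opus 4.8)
The plan is to characterize the optimal latent factor reconstruction function $h^\ast_{\theta_{\text{ch}}}$ as a conditional expectation and then exploit the conditional-independence hypothesis to factor that expectation through the pseudo-label, reducing the whole question to a finite-rank matrix inversion. First I would argue that, at the population optimum of $\mathcal{L}_{\text{ch}}$, the reconstruction map satisfies $h^\ast_{\theta_{\text{ch}}}(\bm{\mathcal{F}}_1)=\mathbb{E}[\bm{\mathcal{F}}_2\mid\bm{\mathcal{F}}_1]$ (up to the per-sample positive scaling intrinsic to the cosine-based SCE objective, which preserves direction). This is the standard Bayes-optimal characterization of a reconstruction predictor and sets up the rest of the argument.

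Next, the hypothesis $\operatorname{HSIC}(\bm{\mathcal{F}}_1,\bm{\mathcal{F}}_2\mid\hat{\bm{C}})=0$ certifies, since HSIC vanishes if and only if the variables are (conditionally) independent for a characteristic kernel, that $\bm{\mathcal{F}}_1$ and $\bm{\mathcal{F}}_2$ are independent given $\hat{\bm{C}}$. Applying the tower rule together with this conditional independence yields
\[
\mathbb{E}[\bm{\mathcal{F}}_2\mid\bm{\mathcal{F}}_1]=\sum_{c}\mathbb{E}[\bm{\mathcal{F}}_2\mid\hat{\bm{C}}=c]\,P(\hat{\bm{C}}=c\mid\bm{\mathcal{F}}_1).
\]
Collecting the class-conditional means into a matrix $\bm{A}$ whose columns are $\mathbb{E}[\bm{\mathcal{F}}_2\mid\hat{\bm{C}}=c]$ and the posteriors into a vector $\bm{p}(\bm{\mathcal{F}}_1)$ with entries $P(\hat{\bm{C}}=c\mid\bm{\mathcal{F}}_1)$, this reads $h^\ast_{\theta_{\text{ch}}}(\bm{\mathcal{F}}_1)=\bm{A}\,\bm{p}(\bm{\mathcal{F}}_1)$.

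Then I would invert this relation. The assumptions $\dim(\bm{\mathcal{F}}_2)>|\hat{\bm{C}}|$ and $\operatorname{rank}(\mathbb{E}[\bm{\mathcal{F}}_2\mid\hat{\bm{C}}=c])=|\hat{\bm{C}}|$ together guarantee that $\bm{A}$ has full column rank, so its Moore--Penrose pseudoinverse satisfies $\bm{A}^{\dagger}\bm{A}=\bm{I}$ and hence $\bm{p}(\bm{\mathcal{F}}_1)=\bm{A}^{\dagger}h^\ast_{\theta_{\text{ch}}}(\bm{\mathcal{F}}_1)$. This is the crux: the full posterior over the $|\hat{\bm{C}}|$ classes is an \emph{exact linear} image of the learned representation. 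Consequently the Bayes rule $\argmax_{c}P(\hat{\bm{C}}=c\mid\bm{\mathcal{F}}_1)$ is realized by the linear read-out $\bm{A}^{\dagger}$ followed by an argmax, so the classes are linearly separable in representation space; invoking the earlier argument that $\hat{\bm{C}}$ approximates the true label $\bm{C}$ with only marginal error transfers this separability to the downstream task. For the complexity claim I would observe that $h^\ast_{\theta_{\text{ch}}}(\bm{\mathcal{F}}_1)$ lives in the $|\hat{\bm{C}}|$-dimensional column space of $\bm{A}$, so a downstream linear probe need only fit weights supported on this subspace, giving sample and parameter complexity $\tilde{O}(\dim(\hat{\bm{C}}))$ by a standard linear-probe bound.

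The hard part will be the first step: the identity $h^\ast=\mathbb{E}[\bm{\mathcal{F}}_2\mid\bm{\mathcal{F}}_1]$ holds cleanly for a squared reconstruction loss, whereas \mname{} optimizes the cosine-based SCE loss, so I would need to show its minimizer agrees with the conditional expectation up to a positive scaling that leaves the column span used in the inversion unchanged, or equivalently restate the theorem for the idealized population optimum. A secondary subtlety is that the conditional-independence-to-factorization step is exact only when HSIC employs a characteristic kernel and the \emph{population} HSIC, rather than its empirical estimate, vanishes; I would state this as the operating assumption so that $\bm{A}^{\dagger}h^\ast$ recovers the posterior exactly instead of only approximately.
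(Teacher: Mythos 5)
Your proposal is correct, and while it lands on the same skeleton as the paper's proof (characterize the optimal reconstruction as $h^{*}_{\theta_{\text{ch}}}(\bm{\mathcal{F}}_1)=\mathbb{E}[\bm{\mathcal{F}}_2\mid\bm{\mathcal{F}}_1]$, use conditional independence plus the rank hypothesis to make the label posterior a \emph{linear} image of that representation, then argue downstream complexity scales with $\lvert\hat{\bm{C}}\rvert$), it derives the key linearity step by a genuinely different lemma. The paper declares $\bm{\mathcal{F}}_1$, $\bm{\mathcal{F}}_2$, $\hat{\bm{C}}$ jointly Gaussian---leaning on the Gaussian form of the Bayesian-Lasso prior---expresses conditional independence as a vanishing partial covariance, and inverts the cross-covariance to get $\mathbb{E}[\hat{\bm{C}}\mid\bm{\mathcal{F}}_1]=\bm{\Sigma}_{\bm{\mathcal{F}}_2\hat{\bm{C}}}^{\dagger}\bm{\Sigma}_{\hat{\bm{C}}\hat{\bm{C}}}\,h^{*}_{\theta_{\text{ch}}}(\bm{\mathcal{F}}_1)$. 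You instead use the tower rule for a discrete label, $h^{*}_{\theta_{\text{ch}}}(\bm{\mathcal{F}}_1)=\bm{A}\,\bm{p}(\bm{\mathcal{F}}_1)$ with $\bm{A}$ the class-conditional-mean matrix, and invert via $\bm{A}^{\dagger}\bm{A}=\bm{I}$. The two routes are essentially interchangeable (with one-hot labels, the columns of $\bm{\Sigma}_{\bm{\mathcal{F}}_2\hat{\bm{C}}}$ are rescaled, centered copies of the columns of $\bm{A}$), but yours is more elementary and more general: it never needs the awkward assertion that a discrete pseudo-label is jointly Gaussian with the representations, and it pinpoints exactly where the characteristic-kernel property of HSIC and the full-rank hypothesis enter (modulo the caveat, which the paper shares, that vanishing \emph{conditional} HSIC characterizes conditional independence only under additional kernel conditions). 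You are also more careful on the first step: the paper simply posits $h_{\theta_{\text{ch}}}(\bm{\mathcal{F}}_1)=\mathbb{E}[\bm{\mathcal{F}}_2\mid\bm{\mathcal{F}}_1]$ without reconciling this with the cosine-based SCE loss that is actually minimized, whereas you flag that discrepancy and how to patch it. Conversely, the paper's proof is more complete on the complexity claim: it runs an ERM basic inequality, a projection bound, and a Gaussian concentration step under a sub-Gaussianity assumption and a lower bound on the labeled sample size $n_2$ to reach $\tilde{O}(\lvert\hat{\bm{C}}\rvert)$, while your closing appeal to a ``standard linear-probe bound'' over the $\lvert\hat{\bm{C}}\rvert$-dimensional column space of $\bm{A}$ is the right idea but would need to be instantiated by essentially that concentration argument for the complexity part of the theorem to be fully proved.
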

Detailed proofs of the theorem can be found in the Appendix. Theorem \ref{theorem:optimaldecoder} states linear classification on $\hat{\bm{C}}$ can be done with $h^*_{\theta_{\text{ch}}}(\bm{\mathcal{F}}_1)$, and downstream sample complexity decreases to the dimension of $\hat{\bm{C}}$. The downstream sample complexity should reduce from $\tilde{O}(\lvert n_2\rvert)$ to a value between $\tilde{O}(\lvert \hat{\bm{C}}\rvert)$ and $\tilde{O}(\lvert D_{ch} \rvert)$ under the assumption of CI and optimal $h^*_{\theta_{\text{ch}}}$, where $n_2$ denotes labeled data \cite{Lee2021, Teng2022}. 

In our approach, we leverage both $\bm{\mathcal{F}}_1$ and $\bm{\mathcal{F}}_2$ by constructing the feature set $\bm{\mathcal{Z}} = [ \bm{\mathcal{F}}_1 | \bm{\mathcal{F}}_2 ]$ for downstream tasks rather than relying solely on $\bm{\mathcal{F}}_2$. This strategy is justified for two main reasons: First, while the mRMR aims to ensure that the mutual information $I(\mathcal{Z};\mathcal{C})$ equals $I(G;\bm{C})$, achieving perfect CI in practice is challenging due to inevitable errors. Second, since $\bm{\mathcal{F}}_2$ consists of a limited number of factors, fewer than $K$, as shown in Table~\ref{tab:minredun}, the performance might be less effective.
 These two practical limitations may prevent the extraction of all task-relevant information necessary for effective downstream performance. 


\begin{table*}[t]
\caption{Node classification accuracy (\%) on seven benchmark datasets. In each column, the \textbf{boldfaced} score denotes the best result, and the \underline{underlined} score represents the second-best result. A.R. refers to the average ranking. OOM means out of memory.}
\centering
\begin{tabular}{l|c|c|c|c|c|c|c|c}
\hline
Model & Cora & Citeseer & Photo & Computers & Pubmed & WikiCS & Arxiv & A.R. \\ \hline
GAE & 76.25 ± 0.15 & 63.89 ± 0.18 & 91.71 ± 0.08 & 86.33 ± 0.44 & 77.24 ± 1.10 & 73.44 ± 0.07 & 65.08 ± 0.24 & 12.71 \\
VGAE & 76.68 ± 0.17 & 64.34 ± 0.11 & 91.87 ± 0.04 & 86.70 ± 0.30 & 77.36 ± 0.60 & 75.12 ± 0.89 & 67.70 ± 0.03 & 11.14 \\
ARGA & 77.95 ± 0.70 & 64.44 ± 1.19 & 91.82 ± 0.08 & 85.86 ± 0.11 & 80.44 ± 0.74 & 69.61 ± 4.01 & 67.34 ± 0.09 & 11.86 \\
ARVGA & 79.50 ± 1.01 & 66.03 ± 0.65 & 91.51 ± 0.09 & 86.02 ± 0.11 & 81.51 ± 1.00 & 63.99 ± 1.71 & 67.43 ± 0.08 & 11.00 \\ \hline
DGI & 80.44 ± 0.64 & 67.74 ± 2.48 & 91.65 ± 0.46 & 87.45 ± 0.47 & 78.83 ± 0.77 & 77.52 ± 0.14 & 66.07 ± 0.45 & 10.43 \\
MVGRL & 81.22 ± 0.94 & 70.54 ± 0.85 & 92.64 ± 0.24 & 88.61 ± 0.64 & 79.46 ± 0.43 & 80.16 ± 0.53 & 69.10 ± 0.10 & 7.42 \\
GRACE & 81.90 ± 0.40 & 71.20 ± 0.50 & 92.15 ± 0.24 & 86.25 ± 0.25 & 80.60 ± 0.40 & 78.25 ± 0.65 & 69.80 ± 0.10 & 8.14 \\
CCA-SSG & 84.00 ± 0.40 & 73.10 ± 0.30 & 93.14 ± 0.14 & 88.76 ± 0.36 & 80.81 ± 0.38 & 78.85 ± 0.41 & 69.22 ± 0.22 & 4.71 \\ \hline
MaskGAE & 83.21 ± 0.70 & 72.62 ± 0.38 & 92.79 ± 0.18 & 89.36 ± 0.18 & 82.01 ± 0.52 & 76.38 ± 0.15 & 71.03 ± 0.17 & 5.14 \\
GraphMAE2 & 83.96 ± 0.85 & \underline{73.42 ± 0.30} & 92.60 ± 0.11 & 87.42 ± 0.52 & 81.23 ± 0.57 & 78.48 ± 0.12 & \underline{71.77 ± 0.14} & 5.14 \\ 
GiGaMAE & 82.13 ± 0.80 & 70.04 ± 1.07 & 93.01 ± 0.46 & \textbf{90.20 ± 0.45} & 80.55 ± 0.75 & \textbf{81.42 ± 0.35} & OOM & 6.43 \\ 
AUG-MAE & \underline{84.10 ± 0.55} & 73.16 ± 0.44 & 92.82 ± 0.17 & 88.52 ± 0.17 & 81.12 ± 0.53 & 71.81 ± 0.69 & 71.20 ± 0.30 & 5.42 \\
Bandana & 82.90 ± 0.39 & 71.39 ± 0.54 & \underline{93.40 ± 0.10} & 89.28 ± 0.14 & \underline{82.77 ± 0.49} & 78.63 ± 0.27 & 71.04 ± 0.39 & 4.14 \\
\hline
\mname{} & \textbf{84.40 ± 0.72} & \textbf{74.45 ± 0.32} & \textbf{93.54 ± 0.08} & \underline{89.39 ± 0.10} & \textbf{82.78 ± 0.67} & \underline{81.22 ± 0.31} & \textbf{72.89 ± 0.01} & \textbf{1.29} \\ \hline
\end{tabular}   
\label{tab:noderesult}
\end{table*}

A crucial aspect to ensure is that the rank of \(\mathbb{E}[\bm{\mathcal{F}}_2 \! \mid \! \hat{\bm{C}}=c]\) is \(\lvert \hat{\bm{C}} \rvert \), conforming that \(\bm{\mathcal{F}}_2\) correlates with every instance of \(\hat{\bm{C}}\) to avoid the trivial solution of $\bm{\mathcal{F}}_1 = \bm{\mathcal{F}}_2$. Instead of validating the MLP's rank, we alternatively demonstrate low redundancy and sample complexity empirically. Low redundancy can be quantified using the expression: 
\begin{align}
\mathbb{E}\left[(\mathbb{E}[\hat{\bm{C}} \mid \bm{\mathcal{F}}_1] - \mathbb{E}[\hat{\bm{C}} \mid \bm{\mathcal{F}}_1, \bm{\mathcal{F}}_2])^2\right] &= \notag \\
\mathbb{E}\left[(\mathbb{E}[\hat{\bm{C}} \mid \bm{\mathcal{F}}_2] - \mathbb{E}[\hat{\bm{C}} \mid \bm{\mathcal{F}}_1, \bm{\mathcal{F}}_2])^2\right] &= 0 \label{eq:F1=F2}.
\end{align}
This implies that predictions for \(\hat{\bm{C}}\) based on \(\bm{\mathcal{F}}_1\) or \(\bm{\mathcal{F}}_2\) individually should yield accuracy equivalent to predictions using both \(\bm{\mathcal{F}}_1\) and \(\bm{\mathcal{F}}_2\) concurrently, consistent with the finding of ~\cite{Tosh2021}. 

\subsection{Complexity Analysis}
The computational complexity of the masked graph auto-encoder framework is not a concern, as evidenced by the work of \cite{Hou2023}, where both the time and space complexities scale linearly with the number of nodes \( N \). Since \mname{}'s encoder and decoder consist of simple MLP, the computational cost would not be problematic. Therefore, we analyze the time complexity of calculating the CI score $Sc_k$.

The first step in the CI module requires  \( O(max(K,N_{\text{labeled\_train}})^2) \) to compute the HSIC score based on Eq.~\eqref{eq:7}, when $N_{\text{labeled\_train}}$ denotes the number of pseudo-labeled nodes and $K$ indicates the number of latent factors. The HSIC  between $\bm{f}_i$ and $\bm{f}_j$ corresponds to $O(K^2)$, and the HSIC between $f_i$ and $\hat{\bm{C}}$ corresponds to $O(max(K,$ $N_{\text{labeled\_train}})^2)$, since we need upsampling to compute HSIC. 
We note that we set $K$ to 16 or 32 in this paper.
The number of $N_{\text{labeled\_train}}$ or $K$ is relatively small compared to the number of unlabeled nodes, and the time complexity is not burdensome. The second step is to update $\bar{\omega}_k^*$, which is related to the number of latent factors $K$. 
Upon detailed equations in the Appendix, the bottleneck lies in the process of computing the inverse matrix.
The inverse computation can be optimized by using the Optimized CW-like algorithms with a time complexity of $O(K^{2.373})$. 



\begin{table*}
\caption{Link prediction results (\%) on three benchmark datasets. In each column, the \textbf{boldfaced} score denotes the best result, and the \underline{underlined} score represents the second-best result. A.R. refers to the average ranking.}
\centering
\begin{tabular}{l|cc|cc|cc|c}
\hline
\multicolumn{1}{c|}{\multirow{2}{*}{Model}} & \multicolumn{2}{c|}{Cora} & \multicolumn{2}{c|}{Citeseer} & \multicolumn{2}{c|}{Pubmed} & \multicolumn{1}{c}{\multirow{2}{*}{A.R.}} \\ \cline{2-7} 
\multicolumn{1}{c|}{} & AUC & AP & AUC & AP & AUC & AP \\ \hline
GAE & 91.09 ± 0.01 & 92.83 ± 0.03 & 90.52 ± 0.04 & 91.68 ± 0.05 & 96.40 ± 0.01 & 96.50 ± 0.02 & 9.17 \\
VGAE & 91.40 ± 0.01 & 92.60 ± 0.01 & 90.80 ± 0.02 & 92.00 ± 0.02 & 94.40 ± 0.02 & 94.70 ± 0.02 & 10.08 \\
ARGA & 92.40 ± 0.01 & 93.23 ± 0.00 & 91.94 ± 0.00 & 93.03 ± 0.00 & 96.81 ± 0.00 & 97.11 ± 0.00 & 6.25 \\
ARVGA & 92.40 ± 0.01 & 92.60 ± 0.00 & 92.40 ± 0.00 & 93.00 ± 0.00 & 96.50 ± 0.00 & 96.80 ± 0.00 & 7.50 \\ 
DGI & 93.88 ± 1.00 & 93.60 ± 1.14 & 95.98 ± 0.72 & 96.18 ± 0.68 & 96.30 ± 0.20 & 95.65 ± 0.26 & 5.92 \\
MVGRL & 93.33 ± 0.68 & 92.95 ± 0.82 & 88.66 ± 5.27 & 89.37 ± 4.55 & 95.89 ± 0.22 & 95.53 ± 0.30 & 9.50 \\
GRACE & 82.67 ± 0.27 & 82.36 ± 0.24 & 87.74 ± 0.96 & 86.92 ± 1.11 & 94.09 ± 0.92 & 93.26 ± 1.20 & 13.00 \\
CCA-SSG & 93.88 ± 0.95 & 93.74 ± 1.15 & 94.69 ± 0.95 & 95.06 ± 0.91 & 96.63 ± 0.15 & 95.97 ± 0.23 & 5.75 \\ \hline
MaskGAE & \underline{96.55 ± 0.13} & \underline{96.05 ± 0.16} & \underline{97.74 ± 0.14} & \textbf{97.99 ± 0.12} & \textbf{98.74 ± 0.04} & \underline{98.64 ± 0.06} & 1.67 \\ 
GraphMAE2 & 92.32 ± 1.66 & 90.98 ± 2.12 & 87.20 ± 4.30 & 86.29 ± 4.50 & 92.89 ± 0.44 & 92.46 ± 0.38 & 12.83 \\ 
GigaMAE & 94.48 ± 0.12 & 94.09 ± 0.21 & 95.11 ± 0.11 & 95.41 ± 0.11 & 93.56 ± 0.82 & 92.42 ± 0.92 & 7.50 \\ 
AUG-MAE & 90.51 ± 0.57 & 89.82 ± 0.62 & 90.63 ± 0.79 & 91.44 ± 0.90 & 94.69 ± 0.71 & 94.10 ± 0.88 & 11.33 \\
Bandana & 95.83 ± 0.06 & 95.38 ± 0.11 & 96.70 ± 0.31 & 97.04 ± 0.38 & 97.31 ± 0.09 & 96.82 ± 0.24 & 3.17 \\
\hline
\mname{} & \textbf{96.93 ± 0.16} & \textbf{96.76 ± 0.15} & \textbf{97.90 ± 0.52} & \underline{97.96 ± 0.60} & \underline{98.66 ± 0.07} & \textbf{98.65 ± 0.09} & \textbf{1.33} \\ \hline
\end{tabular}
\label{tab:linkpred}
\end{table*}

\section{EXPERIMENTS}
\subsection{Experimental Setting}

We evaluate our methodology on seven widely recognized benchmark datasets. These include four citation benchmark networks: Cora \cite{mccallum2000automating}, Citeseer \cite{giles1998citeseer}, Pubmed \cite{pubmed}, and ogbn-Arxiv \cite{hu2020open}, which have been extensively utilized in seminal works such as \cite{kipf2017, felix2019}. Additionally, to rigorously test our approach, we include two co-purchasing networks: Photo \cite{mcauley2015image} and Computers \cite{mcauley2015inferring}, where contrastive learning baselines have been shown to outperform. Lastly, we evaluated using the WikiCS \cite{mernyei2020wikics} dataset. The result of the WikiCS offers an evaluation setting with a lower homophily ratio. For crafting initial pseudo-labels, R-DGAE \cite{MrabahBTK23} is used as the default clustering algorithm and DinkNet \cite{XZYLL23}for large datasets.
We apply the default hyperparameter settings for graph clustering.
The kernel function for HSIC for the features and pseudo labels are set to Gaussian and Delta, respectively.
We provide the code of \mname{} at the following URL.\footnote{\url{https://github.com/vapene/CIMAGE}}

We benchmark \mname{} against thirteen different models including four generative-based methods, (V)GAE~\cite{Kipf2016} and AR(V)GA~\cite{Pan2018}; four contrastive-based methods, DGI~\cite{velickociv2018}, MVGRL~\cite{Hassani2020}, GRACE~\cite{Zhu2020}, and CCA-SSG~\cite{Zhang2021}; two methods based on masked graph auto-encoder, MaskGAE~\cite{MaskGAE}, GraphMAE2~\cite{Hou2023}, and three recent baselines: GiGaMAE~\cite{ShiDTLL23}, AUG-MAE~\cite{wang2024rethinking}, and Bandana~\cite{bandana}. 
For node classification, we utilize standard splits provided by pytorch geometric or a 1:1:8 (training-validation-testing) split if not provided. For the node classification task, we adopt the linear evaluation protocol widely used in prior studies ~\cite{dgi,MaskGAE} to evaluate the quality of representations learned through the self-supervised learning (SSL) pretext task. In this approach, we freeze the encoder's parameters to generate graph embeddings, then apply a new linear layer to these fixed embeddings for classification.

For the node classification task, we follow the linear evaluation protocol widely adopted in previous studies, such as~\cite{dgi, MaskGAE}. Specifically, the encoder parameters are frozen, and it is used to generate graph embeddings. A new linear layer is then trained on these fixed embeddings to assess the quality of the representations learned through the self-supervised learning (SSL) pretext task.

In the link prediction task, if models were not initially designed for edge reconstruction, namely DGI, MVGRL, GRACE, CCA-SSG, GraphMAE2, and AUG-MAE, we adapted the approach presented in ~\cite{Kipf2016} to generate edge using learned node features. Specifically, a dot product decoder was applied to the learned representations using 85\% of the edges. We randomly sampled an equivalent number of non-existing edges to compute the AUC and AP scores to match the remaining test edges. 

\begin{table}
\caption{Performance comparison with and without latent factor reconstruction, $h_{\theta_{\text{ch}}}$. \textbf{Boldfaced} scores denote the best result.}
\resizebox{\columnwidth}{!}{
\begin{tabular}{l|l|l|l|l}
\hline
Dataset & Metric & Without $h_{\theta_{\text{ch}}}$ & With $h_{\theta_{\text{ch}}}$ & Difference \\ 
\hline
Cora & Link AUC & 96.16 ± 0.65 & \textbf{96.93 ± 0.16} & + 0.77 \\
Cora & Link AP & 96.39 ± 0.63 & \textbf{96.76 ± 0.15} & + 0.37 \\
Cora & Node ACC & 83.90 ± 0.98 & \textbf{84.40 ± 0.72} & + 0.50 \\
\midrule
Citeseer & Link AUC & 97.64 ± 0.64 & \textbf{97.90 ± 0.52} & + 0.26 \\
Citeseer & Link AP & 97.52 ± 0.73 & \textbf{97.96 ± 0.60} & + 0.44 \\
Citeseer & Node ACC & 73.42 ± 0.98 & \textbf{74.45 ± 0.32} & + 1.03 \\
\hline
\end{tabular}
}
\label{ablation:channeldecoder}
\end{table}

\subsection{Performance on Node Classification}

The performance results for node classification are presented in Table \ref{tab:noderesult}, where \mname{} demonstrates superior performance compared with established baselines. This distinction is particularly evident in Citeseer and Arxiv datasets, which significantly underscore the robustness and effectiveness of our proposed framework in handling node classification tasks. A point of particular interest within these results is the relatively lower performance of the Graph Auto-Encoder (GAE), which serves as a foundational pretext task in our study. 
This performance difference between GAE and \mname{} demonstrates the superiority of mRMR-based self-supervised learning by explicitly incorporating conditional independence.
\mname{} exhibits superior overall performance relative to the others as reflected in the average ranking metric (A.R) within Table~\ref{tab:noderesult}.

\subsection{Performance on Link Prediction}

As shown in Table \ref{tab:linkpred}, \mname{} demonstrates superior performance compared to most baselines, except MaskGAE. We attribute this performance to the edge reconstruction scheme of the pretext task. This method is also employed by MaskGAE, which accounts for the comparable performance. However, \mname{} achieves a marginally higher average rank. This incremental enhancement in performance highlights the benefits of incorporating conditional independence-aware masking, suggesting a promising avenue for advancements in link prediction within the self-supervised learning domain.

\subsection{Ablation Studies}
In this section, we conduct five ablation studies on the main components of \mname{} to understand their roles comprehensively. 
\subsubsection{Effect of latent factor reconstruction $h_{\theta_{\text{ch}}}$.}
Considering the importance of CI-aware masking, we investigate whether the proposed context reconstruction strategy enhances performance. Table \ref{ablation:channeldecoder} shows the performance of both link prediction and node classification with and without the latent factor reconstruction on two different datasets. We can observe that conditional independence subset reconstruction enhances both link prediction and node classification by a notable margin, indicating the contribution to learning the rich graph representation.


\begin{table}
    \caption{Performance comparison on node classification and link prediction tasks for Cora and Citeseer datasets with different values of $K$. In each column, the \textbf{boldfaced} score denotes the best result.}
    \vspace{-3mm}
    \centering
    \resizebox{\linewidth}{!}{ 
    \begin{tabular}{l|l|ccccc}
        \toprule
        Dataset & Metric & $K=4$ & $K=8$ & $K=16$ & $K=32$ & $K=64$ \\
        \midrule
        \multirow{3}{*}{Cora} & Link AUC & 95.69 ± 0.74 & 96.01 ± 0.94 & \textbf{96.93 ± 0.16} & 94.76 ± 0.75 & 94.74 ± 1.04 \\
        & Link AP & 95.76 ± 0.57 & 96.15 ± 0.77 & \textbf{96.76 ± 0.15} & 95.27 ± 0.67 & 95.26 ± 1.08 \\
        & Node ACC & 82.35 ± 0.72 & 83.45 ± 0.68 & \textbf{84.40 ± 0.72} & 80.09±0.78 & 79.74 ± 0.87 \\
        \midrule
        \multirow{3}{*}{Citeseer} & Link AUC & 93.30 ± 1.26 & 95.64 ± 0.76 & 96.89 ± 0.68 & \textbf{97.90 ± 0.52} & 96.36 ± 0.66 \\
        & Link AP & 92.90 ± 1.65 & 95.64 ± 0.88 & 96.99 ± 0.81 & \textbf{97.96 ± 0.60} & 96.76 ± 0.71 \\
        & Node ACC & 72.17 ± 0.72 & 72.28 ± 0.89 & \textbf{74.45 ± 0.32} & 73.97 ± 0.89 & 69.73 ± 1.16 \\
        \bottomrule
    \end{tabular}
    }
    \vspace{-2mm}
    \label{ablation:numchannels}
\end{table}

\begin{table}
    \caption{Effect of single view embeddings on node classification and the number of factors for each view}
    \vspace{-2mm}
    \centering
    \small 
    \begin{tabular}{l|l|ccc}
        \toprule
         Dataset & Metric & $\bm{\mathcal{\mathcal{F}}}_1$ & $\bm{\mathcal{\mathcal{F}}}_2$ & $[\bm{\mathcal{F}_1}\! \mid \! \bm{\mathcal{F}_2}]$\\ 
        \midrule
        \multirow{2}{*}{Cora} & Node ACC & 82.90 ± 0.89 & 83.10 ± 0.77 & 84.40 ± 0.72\\ 
        & num. factors & 8 & 8 & 16 \\
        \midrule
        \multirow{2}{*}{Citeseer} & Node ACC & 71.99 ± 0.78 & 72.32 ± 0.24 & 74.45 ± 0.32 \\ 
        & num. factors & 5 & 11 & 16 \\
        \bottomrule
    \end{tabular}
    \vspace{-1mm}
    \label{tab:minredun}
\end{table}

\subsubsection{Assessing the minimum redundancy-aware masking method}
Eq.~\eqref{eq:F1=F2} in the theoretical analysis section indicates that $\bm{\mathcal{F}}_1$ or $\bm{\mathcal{F}}_2$ individually should yield accuracy equivalent to predictions using both $\bm{\mathcal{F}}_1$ and $\bm{\mathcal{F}}_2$ concurrently. Table \ref{tab:minredun} presents the node classification results when only a single view is employed. 
Even though the performance marginally decreases because of the shrinkage of feature dimensions, relatively consistent performance demonstrates that both $\bm{\mathcal{F}}_1$ and $\bm{\mathcal{F}}_2$ satisfy minimum redundancy.
$ \bm{\mathcal{F}}_1$ in the Citeseer dataset shows a higher decrement because fewer factors are assigned.

\subsubsection{Effect of latent factor dimension size $D_{ch}$.}
Table \ref{ablation:performance_metrics_D_{ch}} ablates the effect of varying embedding sizes of latent factors. Embedding size reflects the effectiveness of information compression. 
Although our input for the structure reconstruction function $h_{{\theta}_{\text{st}}}$ has an overall feature dimension ($D_{ch} * K$) of 512 or 1024, we only leverage comparably small $D_{ch}$ for each factor for distinguishing the subsets that satisfy conditional independence. Therefore, \mname{} has to find a balance that is not too small for $D_{ch}$ but not too oversized $D_{ch} * K$. The result of different $D_{ch}$ is presented in Table \ref{ablation:performance_metrics_D_{ch}} for two datasets. Notably, a dimension size of 32 yields the highest link prediction and node classification scores.

\begin{table}
\caption{Performance comparison on node classification and link prediction tasks for Cora and Citeseer datasets with different values of latent factor dimension $D_{ch}$. \textbf{Boldfaced} scores denote the best results.}
\resizebox{1.0\columnwidth}{!}{
\begin{tabular}{l|l|cccc}
    \toprule
    Dataset & Metric & $D_{ch}=8$ & $D_{ch}=16$ & $D_{ch}=32$ & $D_{ch}=64$ \\
    \midrule
    \multirow{3}{*}{Cora} & Link AUC  & 95.88 ± 0.64 & 96.27 ± 0.73 & \textbf{96.93 ± 0.16} & 94.21 ± 0.95 \\
    & Link AP & 96.01 ± 0.59 & 96.51 ± 0.56 & \textbf{96.76 ± 0.15} & 94.92 ± 0.78 \\
    & Node ACC & 83.46 ± 0.81 & 83.75 ± 0.86 & \textbf{84.40 ± 0.72} & 80.91 ± 0.81 \\
    \midrule
    \multirow{3}{*}{Citeseer} & Link AUC & 97.15 ± 0.55 & 97.30 ± 0.59 & \textbf{97.90 ± 0.52} & 96.12 ± 0.78 \\
    & Link AP & 97.32 ± 0.63 & 97.52 ± 0.61 & \textbf{97.96 ± 0.60} & 96.52 ± 0.79 \\
    & Node ACC & 72.95 ± 1.24 & 73.02 ± 1.07 & \textbf{74.45 ± 0.32} & 72.00 ± 0.97 \\
    \bottomrule
\end{tabular}
}
\label{ablation:performance_metrics_D_{ch}}
\end{table}

\subsubsection{Assessing the Impact of Clustering Performance.}


Figure 2 demonstrates how pseudo-labels' quality affects the model's capacity to learn expressive representations. We fixed the number of labeled nodes identically for a fair comparison. At a clustering accuracy of 20\%, the node classification accuracy stands at merely 71.15\%. This accuracy improves as the quality of pseudo-labels increases, with a 45\% quality pseudo-label achieving a 72.53\%. Remarkably, with a clustering accuracy of 68\%, which matches the performance of a graph clustering of \mname{}, the method achieves state-of-the-art accuracy in downstream tasks. Further increasing the accuracy of pseudo-labels to 100\% results in an impressive accuracy of 75.12\%. 
This experiment validates our approach's effectiveness and suggests the potential for even greater improvements with advancements in clustering algorithms.

\subsubsection{Ablation study on the effect of factor number $K$}
The number of factor $K$ determines the subset to satisfy conditional independence. A smaller value of $K$ often leads to challenges in achieving conditional independence among the decomposed subsets because of the overlap and redundancy among factors. Conversely, an excessively large $K$ may lead to the generation of factors that are either trivial or lack substantive relevance. Therefore, we conducted an ablation study about the number of factors. 
From Table~\ref{ablation:numchannels}, a value of 16 or 32 is appropriate for $K$. The results show identical patterns for both Cora and Citeseer datasets. It increases slowly until the best number, then decreases rapidly by around 4\%. In Citeseer, optimal $K$ is different between node classification and link prediction. 

\begin{figure}[t]
\centering
\includegraphics[width=0.9\columnwidth, angle=0]{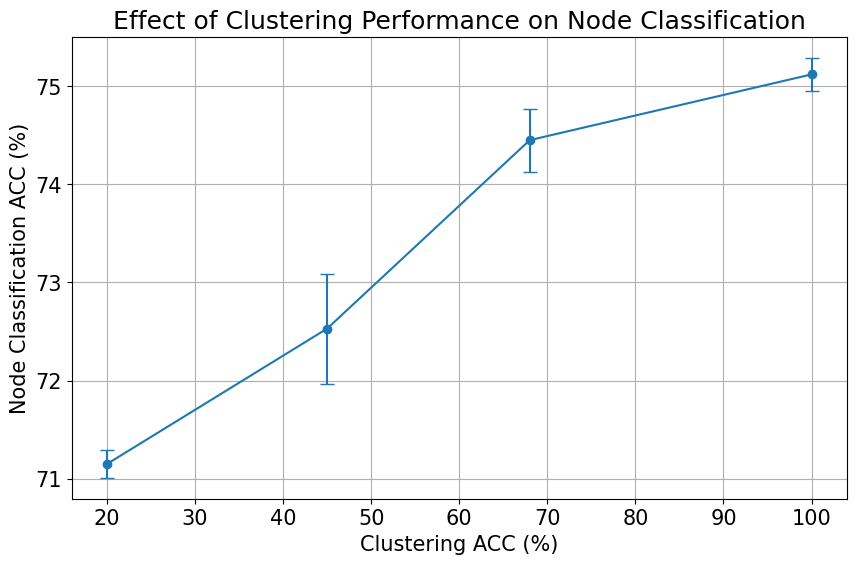}
\vspace{-2mm}
\caption{Performance variation in node classification relative to clustering accuracy on the Citeseer dataset, with standard deviation represented by error bars.}
\label{fig:taskagnostic}
\vspace{-2mm}
\end{figure}

\section{CONCLUSION}
We introduced \mname{}, a novel masked graph auto-encoder framework that explicitly leverages conditional independence to enhance representation quality. Our approach goes beyond traditional masked graph auto-encoders by decomposing latent features into two subsets that satisfy conditional independence with pseudo-labels obtained through unsupervised graph clustering, thereby achieving minimum redundancy and maximum relevance. \mname{} also introduces a novel pretext task that reconstructs these decomposed subsets to further refine the learned representations. Empirical results underscore the effectiveness of \mname{}, demonstrating substantial improvements in both node classification and link prediction tasks. 

\section*{Acknowledgement}
This work was supported by the Institute of Information \& communications Technology Planning \& evaluation (IITP) grant and National Research Foundation of Korea (NRF) grant funded by the Korea government (MSIT) (RS-2019-II190421, RS-2020-II201821, RS-2024-00438686, RS-2024-00436936, RS-2024-00360227, RS-2023-0022544, NRF-2021M3H4A1A02056037, RS-2024-00448809).  This research was also partially supported by the Culture, Sports, and Tourism R\&D Program through the Korea Creative Content Agency grant funded by the Ministry of Culture, Sports and Tourism in 2024 (RS-2024-00333068, RS-2024-00348469 (25\%)).


\clearpage
\bibliographystyle{ACM-Reference-Format}
\balance
\bibliography{WSDM}

\clearpage
\appendix
\section{Appendix}

\newcolumntype{R}{>{\raggedleft\arraybackslash}X}
\begin{table}[t]
\caption{Link prediction hyperparameters}
\centering
\begin{tabularx}{\columnwidth}{lRRR}
\toprule
Link prediction & Cora & Citeseer & Pubmed \\
\midrule
\# of factors \( K \) & 16 & 32 & 32 \\
factor dim. & 32 & 32 & 32 \\
joint learning reg. \( \lambda_1 \) & 0.138 & 0.61 & 0.25 \\
masking rate & 0.7 & 0.7 & 0.7 \\
encoder hidden dim. & 512 & 512 & 512 \\
structure decoder hidden dim. & 32 & 64 & 32 \\
factor recon. hidden dim. & 256 & 256 & 256 \\
clustering reg. \( \lambda_2 \) & 0.1     & 0.2 & 0.2 \\
\bottomrule
\end{tabularx}
\label{linkprediction}
\end{table}

\begin{table*}[t]
    \caption{Node classification hyperparameters}
    \centering
    \resizebox{0.9\linewidth}{!}{ 
    \begin{tabular}{l*{7}{r}}
        \toprule
        Node classification & Cora & Citeseer & Pubmed & Photo & Computers & WikiCS & Arxiv \\
        \midrule
        \# of factors \( K \) & 16 & 16 & 16 & 32 & 32 & 32 & 16 \\
        factor dim. & 32 & 32 & 32 & 16 & 16 & 16 & 32 \\
        joint learning reg. \( \lambda_1 \) & 0.86 & 0.77 & 0.63 & 0.70 & 0.32 & 0.60 & 0.82 \\
        masking rate & 0.7 & 0.7 & 0.7 & 0.7 & 0.7 & 0.7 & 0.7 \\
        encoder hidden dim. & 512 & 512 & 512 & 512 & 512 & 512 & 256 \\
        structure recon. hidden dim. & 32 & 32 & 32 & 32 & 64 & 32 & 128 \\
        factor recon. hidden dim. & 256 & 256 & 256 & 256 & 256 & 256 & 128 \\
        clustering reg. \( \lambda_2 \) & 0.4 & 0.4 & 0.2 & 0.1 & 0.2 & 0.1 & 0.1 \\
        \bottomrule
    \end{tabular}
    }
    \label{nodeclassificationhyp}
\end{table*}

\subsection{Data statistics}
Table \ref{datastatistics} presents the data statistics utilized for our experiments. It highlights variations in densities and homophily ratios alongside other fundamental properties.

\newcolumntype{R}{>{\raggedleft\arraybackslash}X}
\begin{table*}[t]
\centering
\begin{tabularx}{\textwidth}{lRRRRRRR}
\toprule
 &\# of nodes &\# of edges &Dim. of features &\# of classes &Edge density &Homophily ratio \\ 
\midrule
Cora & 2,708 & 10,556 & 1,433 & 7 & $2.9e^{-3}$ & 0.81 \\ 
Citeseer & 3,327 & 9,104 & 3,703 & 6 & $1.6e^{-3}$ & 0.74 \\ 
Photo & 7,650 & 238,162 & 745 & 8 & $8.1e^{-3}$ & 0.83 \\ 
Computers & 13,752 & 491,722 & 767 & 10 & $5.2e^{-3}$ & 0.78 \\ 
Pubmed & 19,717 & 88,648 & 500 & 3 & $4.6e^{-4}$ & 0.80 \\ 
WikiCS & 11,701 & 216,123 & 300 & 10 & $6.3e^{-3}$ & 0.65 \\ 
Arxiv & 169,343 & 1,166,243 & 128 & 40 & $8.1e^{-5}$& 0.66 \\
\bottomrule
\end{tabularx}
\caption{Dataset statistics}
\label{datastatistics}
\end{table*}

\subsubsection{Hyper-parameters}
Table \ref{linkprediction} provides hyperparameters for the link prediction task, and Table \ref{nodeclassificationhyp} provides hyperparameters for the node classification task.

\subsection{Downstream sample complexity}
Theorem~\ref{theorem:optimaldecoder} has shown that utilizing the optimal pretext task lowers the number of labeled data required for the downstream node classification task between $\tilde{O}(\lvert \hat{\bm{C}}\rvert)$ and $\tilde{O}(\lvert D_{ch} \rvert)$. Figure \ref{fig:lowsamplecomplexity} illustrates the variation in node classification accuracy across different labeled data ratios. \mname{} remains consistent up to a 5\% training ratio, staying within one standard deviation. The gap between \mname{} and GCN expands from 2.42\% to 7.71\% as the training ratio reduces to 1\%, signifying \mname{}'s enhanced robustness to smaller training ratios. 

However, there is a significant decrease in performance when moving from a 5\% to a 1\% training ratio. This deviation from theoretical expectations could be considered an incomplete CI error. \mname{} decomposes $\bm{\mathcal{F}}_1$ and $\bm{\mathcal{F}}_2$, but such an optimization process requires a sufficient number of labeled data. Although using a 5\% train ratio is acceptable since it is already less than the standard setting provided in Pytorch geometric, developing a more efficient CI measure module would benefit future work. For these evaluations, the validation ratio is maintained at 0.2\%, with the remainder split into training and testing data. The standard deviation is computed over three trials.

\begin{figure}[t]
\centering
\includegraphics[width=1.0\columnwidth, angle=0]{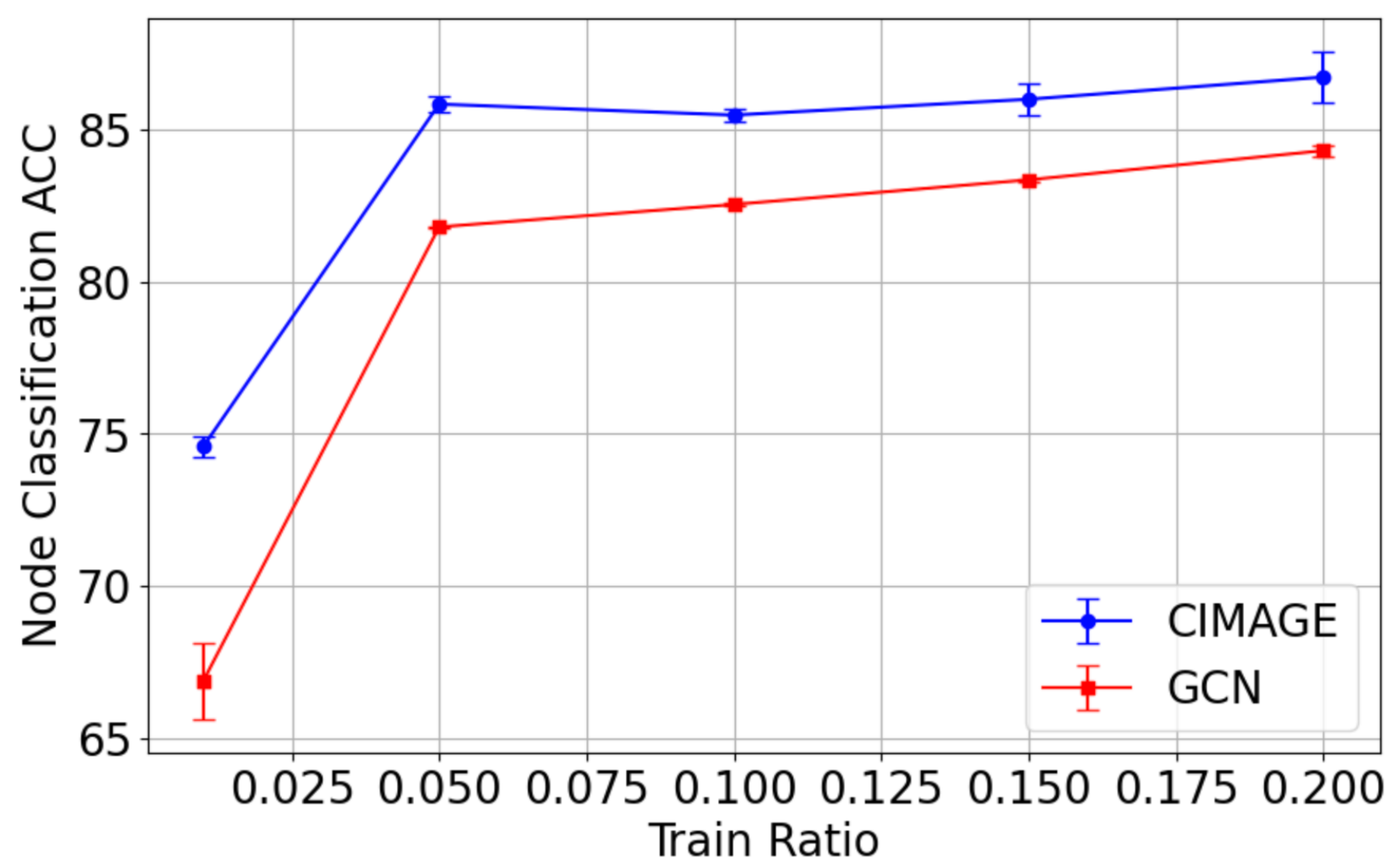}
\caption{Robustness to low downstream sample complexity.}
\label{fig:lowsamplecomplexity}
\end{figure}

\subsection{Derivation for Closed-Form Equation (\ref{eq:CMIHSIC})}
The derivation follows the one presented in ~\cite{Koyama2022}. 

CMI-HSIC is represented with the equation below:
\[
\argmax_{\bm{\omega}} \sum_{i=1}^{D}{\omega_i \widehat{\operatorname{HSIC}}(\bm{f}_i,\hat{\bm{C}})} - (1-\pi)\sum_{i,j=1}^{D}{\omega_i \omega_j \widehat{\operatorname{HSIC}}(\bm{f}_i,\bm{f}_j)} - \beta\lvert\bm{\omega}\rvert_1.
\]

Suppose \textbf{K}, \textbf{L} are the corresponding kernel matrices of $\bm{f}_i$ and $\hat{\bm{C}}$, respectively. We used a centered Gram matrix $\mathbf{\bar{K}} = (\bm{I}-\frac{1}{n}\mathbf{1}\mathbf{1}^\mathsf{T})\textbf{K}(\bm{I}-\frac{1}{n}\mathbf{1}\mathbf{1}^\mathsf{T})$ and $\mathbf{\bar{L}}$ to compute $\widehat{\operatorname{HSIC}}(\bm{f}_{i},\hat{\bm{C}}) = \mathrm{Tr}(\bar{\textbf{K}}^{(i)}\bar{\textbf{L}})$. Assuming $\pi$ as a constant and $\bar{\textbf{L}} \sim \mathcal{N}(\bar{\textbf{K}}\bm{\omega}, \sigma^2\mathbf{I})$, where $\sigma$ is the standard deviation of i.i.d. noise in the classical linear model. Now assume $\pi=\frac{1}{2}$, CMI-HSIC can be written as below:
\begin{equation}
\begin{aligned}
    &\argmin_{\bm{\omega}} \Bigg[ -\sum^{D}_{i=1} \omega_i \widehat{\operatorname{HSIC}}(\bm{f}_i,\hat{\bm{C}}) + \frac{1}{2} \sum_{i,j=1}^{D} \omega_i \omega_j \widehat{\operatorname{HSIC}}(\bm{f}_i,\bm{f}_j) \\
    &\qquad + \frac{1}{2} \widehat{\operatorname{HSIC}}(\hat{\bm{C}},\hat{\bm{C}}) \Bigg].
    \end{aligned}
\end{equation}
The objective could be equivalently written as:
\begin{equation}
\begin{aligned}
    &\frac{1}{2}\Bigg[ -\frac{1}{2}\sum^{D}_{i=1} \omega_i \bar{\textbf{K}}^{(i)}\bar{\textbf{L}} + \sum_{i,j=1}^{D} \omega_i \omega_j \bar{\textbf{K}}^{(i)}\bar{\textbf{K}}^{(j)} + \bar{\textbf{L}}\bar{\textbf{L}} \Bigg] \\
    &\approx \lVert \bar{\textbf{L}} - \sum^{D}_{i=1} \omega_i \bar{\textbf{K}}^{(i)} \rVert^2.
\end{aligned}
\end{equation}
The above equation has been converted to a linear lasso objective of \cite{Tibshirani1996}. We set the prior \( p(\bm{\omega} \mid \bm{\nu}) \) as the Student's t distribution in the form \( \prod_{i} (1+\frac{\omega_i^2}{\nu_i})^{-\frac{\nu_i+1}{2}} \), such that the vector \( \bm{\nu} >0 \). This aligns with the standard assumptions of the Bayesian linear model, which is \( p(\bm{\omega}) = \prod_{i} p(\omega_i) \) and that \( p(\bm{\omega}) \) is symmetric. Then, the non-Gaussian prior's density \( p(\bm{\omega}) \) can be represented as a supremum over Gaussian functions and can be represented by some function \( \varphi(\bm{\xi}) \).
\begin{equation}
\begin{aligned}
&\log p(\bm{\omega}) = \sup_{\bm{\xi}>0}\log N(\bm{\omega};0,\bm{\xi}^{-1})\varphi(\bm{\xi}) \\
&= \sup_{\bm{\xi}>0}(-\frac{\bm{\omega}^2}{2\bm{\xi}}-\varphi(\bm{\xi})). 
\end{aligned}
\label{eq:nongaussian}
\end{equation}
Building upon this transformation by ~\cite{palmer2005}, the $\bm{\omega}$ distribution would be represented below.
\begin{align}
p(\omega_i|\nu_i) &= q(\lvert \omega_i \rvert \nu_{i}^{\frac{1}{2}})\nu_{i}^{\frac{1}{2}} \notag \\
&= \nu_{i}^{\frac{1}{2}} \operatorname{exp}\left(\log q(\lvert \omega_i \rvert \nu_{i}^{\frac{1}{2}})\right).
\end{align}
By using Equation (\ref{eq:nongaussian}),
\begin{align}
p(\omega_i|\nu_i) &= q(\lvert \omega_i \rvert \nu_{i}^{\frac{1}{2}})\nu_{i}^{\frac{1}{2}} \notag \\
&= \nu_{i}^{\frac{1}{2}} \sup_{\xi>0}\operatorname{exp}(-\frac{\omega_i^2 \nu_i}{2\xi_i}-\varphi(\xi_i)).
\end{align}
By representing as a Gaussian variable,
\begin{align}
p(\omega_i|\nu_i) &= q(\lvert \omega_i \rvert \nu_{i}^{\frac{1}{2}})\nu_{i}^{\frac{1}{2}} \notag \\
&= \nu_{i}^{\frac{1}{2}} \sup_{\xi>0} N(\omega_i \mid 0, \frac{\xi_i}{\nu_i})(\frac{2\pi\xi_i}{\nu_i})^{\frac{1}{2}}\operatorname{exp}(-\varphi(\xi_i)).
\label{eq:transformation33}
\end{align}

With an assumption $\bar{\textbf{L}} \sim N(\bar{\textbf{K}}\bm{\omega}, \sigma^2\mathbf{I})$, together with Equation (\ref{eq:transformation33}), the variational bound for $\log p(\bar{\hat{\bm{C}}}\mid \bm{\nu})$ can be defined. 
\begin{equation}
\begin{aligned}
&\log \int_{\mathbb{R}^D} N(\bar{\textbf{L}} \mid \bar{\textbf{K}}\bm{\omega}, \sigma^2\mathbf{I}) p(\bm{\omega} \mid \bm{\nu}) \\
&\quad = \log \int_{\mathbb{R}^D} N_1 \prod^{D}\nu_{i}^{\frac{1}{2}} \sup_{\xi>0} N_2\left(\frac{2\pi\xi_i}{\nu_i}\right)^{\frac{1}{2}}\operatorname{exp}(-\varphi(\xi_i)).
\end{aligned}
\end{equation}
Given that the logarithm is monotonically increasing,
\begin{equation}
\begin{aligned}
&\text{Lower bound is defined below.} \\
&\quad \sup_{\xi>0} \left[ \log \int_{\mathbb{R}^D} N_3 d\omega \right. + \left. \log \prod^{D}L_i(\xi_i) \right] \\
&\text{where } N_1 = N(\bar{\textbf{L}} \mid \bar{\textbf{K}}\bm{\omega}, \sigma^2\mathbf{I}), \\
&N_2 = N(\omega_i \mid 0, \frac{\xi_i}{\nu_i}), \\
&N_3 = N(\bar{\textbf{L}} \mid \bar{\textbf{K}}\bm{\omega}, \sigma^2\mathbf{I}) N(\bm{\omega} \mid \bm{0}, \bm{\Theta}^{-1}), \\
&L_i(\xi_i) = \nu_{i}^{\frac{1}{2}} \left(\frac{2\pi\xi_i}{\nu_i}\right)^{\frac{1}{2}}\operatorname{exp}(-\varphi(\xi_i)), \Theta = \prod^{P}\frac{\bm{\nu}}{\bm{\xi}}.
\end{aligned}
\end{equation}
Equation (\ref{eq:finalVB}) gives the final variational bound for $\log p(\bar{\textbf{L}}\mid \bm{\nu})$.
\begin{align}
&\log p(\bar{\textbf{L}}\mid \bm{\nu}) \approx -\inf_{\bm{\mu},\bm{\Sigma},\bm{\xi}} \frac{1}{2}\bigg[\frac{1}{\sigma}\lVert \bar{\textbf{L}} - \bar{\textbf{K}}\bm{\mu}\rVert^2 + \bm{\mu}^{\mathsf{T}}\bm{\Theta}\bm{\mu} \nonumber \\
&\quad + \frac{1}{\sigma}\mathrm{Tr}(\bar{\textbf{K}}^{\mathsf{T}}\bar{\textbf{K}}\bm{\Sigma}) + \mathrm{Tr}(\bm{\Theta}\bm{\Sigma}) - \log \det \bm{\Sigma} \nonumber \\
&\quad + \sum^{P}(2\varphi(\xi_p)-\log \nu_p) + N^2\log(2\sigma^2) \bigg].
\label{eq:finalVB}
\end{align}
With posterior $N(\bm{\omega} \mid \bm{\mu}, \bm{\Sigma})$, likelihood $N(\bar{\textbf{L}}\mid \bar{\textbf{K}}\bm{\omega}, \sigma^2\mathbf{I})$, and prior $N(\bm{\omega}\mid \bm{0},\bm{\Theta}^{-1})$. Therefore, the closed-form update equation for $\bm{\mu}$ can be defined as below by only considering related terms of $\bm{\mu}$ in Equation (\ref{eq:closedform}).
\begin{equation}
\label{eq:closedform}
    \bm{\mu} = \arginf_{\bm{\mu}} \frac{1}{2\sigma^2}\lVert \bar{\textbf{L}}-\bar{\textbf{K}}\bm{\mu} \rVert^2_2 + \frac{1}{2}\bm{\mu}^{\mathsf{T}}\bm{\Theta}\bm{\mu}+\beta \lVert\bm{\mu}\rVert_1
\end{equation}

The rest of the updated equations are presented below with element-wise product $\odot$ and element-wise division $\oslash$. 
\begin{align}
    \bm{\Sigma} &= \frac{\partial}{\partial \bm{\Sigma}}\left[\frac{1}{\sigma^2}\mathrm{Tr}(\bar{\textbf{K}}^{\mathsf{T}}\bar{\textbf{K}}\bm{\Sigma})+\frac{1}{2}\mathrm{Tr}(\bm{\Theta}\bm{\Sigma})-\frac{1}{2}\log\det\bm{\Sigma}\right] \notag \\ 
    &= \sigma^2(\bar{\textbf{K}}^{\mathsf{T}}\bar{\textbf{K}}+\sigma^2\bm{\Theta})^{-1}.
    \label{eq:21}
\end{align}
\begin{equation}
    \bm{\xi} = \arginf_{\bm{\xi}}  \frac{1}{2}\bm{\mu}^{\mathsf{T}}\bm{\Theta}\bm{\mu} + \frac{1}{2}\mathrm{Tr}(\bm{\Theta}\bm{\Sigma}) +\sum^{P}\varphi(\xi_p)
\end{equation}
\begin{equation}
    \bm{\nu} = \frac{\partial}{\partial\bm{\nu}}[\bm{\mu}^{\mathsf{T}}\bm{\Theta}\bm{\mu}+\mathrm{Tr}(\bm{\Theta}\bm{\Sigma})-\sum^{P}\log\nu_i] =  \bm{\xi} \oslash (\bm{\mu}\odot \bm{\mu}+\operatorname{diag}\bm{\Sigma}),
\end{equation}
\begin{align}
    \sigma^2 &= \frac{\partial}{\partial\sigma^2}  \left[ \frac{1}{\sigma}\lVert \bar{\textbf{L}} - \bar{\textbf{K}}\bm{\mu}\rVert^2 +\frac{1}{\sigma^2}\mathrm{Tr}(\bar{\textbf{K}}^{\mathsf{T}}\bar{\textbf{K}}\bm{\Sigma}) + N^2\log(\sigma^2) \right] \notag \\
    &= \frac{1}{N^2}(\lVert \bar{\textbf{L}}-\bar{\textbf{K}}\bm{\mu} \rVert^2 + \mathrm{Tr}(\bar{\textbf{K}}^{\mathsf{T}}\bar{\textbf{K}}\bm{\Sigma}))
\end{align}

\subsection{Proof of theorem~\ref{theorem:optimaldecoder}}
Here is a restatement of Theorem~\ref{theorem:optimaldecoder}: 
\theorema*

\begin{proof}[Proof of Theorem~\ref{theorem:optimaldecoder}]
As stated in the derivation of Equation (\ref{eq:closedform}), the prior is expressed as a form of Gaussian, \(N(\bm{\omega}\mid \bm{0}, \bm{\Theta}^{-1})\). This makes \(\bm{\mathcal{F}}_1\), \(\bm{\mathcal{F}}_2\), and \(\hat{\bm{C}}\) jointly Gaussian, and the partial covariance between them can be expressed as \(\bm{\Sigma}_{\bm{\mathcal{F}}_1\bm{\mathcal{F}}_2\mid\hat{\bm{C}}} = \operatorname{Cov}[\bm{\mathcal{F}}_1-\mathbb{E}[\bm{\mathcal{F}}_1\mid\hat{\bm{C}}], \bm{\mathcal{F}}_2-\mathbb{E}[\bm{\mathcal{F}}_2\mid\hat{\bm{C}}]] = \bm{\Sigma}_{\bm{\mathcal{F}}_1\bm{\mathcal{F}}_2}-\bm{\Sigma}_{\bm{\mathcal{F}}_1\hat{\bm{C}}}\bm{\Sigma}_{\hat{\bm{C}}\hat{\bm{C}}}^{-1}\bm{\Sigma}_{\bm{\mathcal{F}}_2\hat{\bm{C}}}\). 

Following the proof by \cite{Lee2021}, we show the optimal reconstruction $\hat{\bm{\mathcal{F}}_2}^*$ can separate downstream label without any approximation. Consider our factor reconstruction function.
\begin{equation}
    h_{\theta_{\text{ch}}}(\bm{\mathcal{F}}_1) = \mathbb{E}[\bm{\mathcal{F}}_2\mid\bm{\mathcal{F}}_1=f_1]
\end{equation}

Together with the assumption that $\operatorname{rank}(\mathbb{E}[\bm{\mathcal{F}}_2 \mid \hat{\bm{C}}=c]) = \lvert \bm{C} \rvert$, the matrix $\bm{\Sigma}_{\bm{\mathcal{F}}_2\hat{\bm{C}}}$ will also have a rank of $\lvert \bm{C} \rvert$. Thus, it has a left inverse, and $\mathbb{E}[\hat{\bm{C}} \mid \bm{\mathcal{F}}_1]$ will be given by $\bm{\Sigma}_{\bm{\mathcal{F}}_2\hat{\bm{C}}}^\dagger \bm{\Sigma}_{\hat{\bm{C}}\hat{\bm{C}}} h_{\theta_{\text{ch}}}(\bm{\mathcal{F}}_1)$ without any approximation. Considering $\bm{\Sigma}_{\bm{\mathcal{F}}_2\hat{\bm{C}}}^\dagger \bm{\Sigma}_{\hat{\bm{C}}\hat{\bm{C}}}$ as an arbitrary matrix $\bm{M}$, then $\mathbb{E}[\hat{\bm{C}} \mid \bm{\mathcal{F}}_1] = \bm{M} h_{\theta_{\text{ch}}}(\bm{\mathcal{F}}_1)$. This expression is identical to a linear transformation of the reconstruction. The operation results in a linear combination of the columns of $h_{\theta_{\text{ch}}}(\bm{\mathcal{F}}_1)$, where the coefficients of the combination are given by the corresponding rows of $\bm{M}$. This operation preserves the properties of vector addition and scalar multiplication.

To show the downstream label complexity, consider the inequality below and assume it naturally holds. 
\begin{equation}
\frac{1}{2n_2}\lVert \hat{\bm{C}} - h_{\theta_{\text{ch}}}(\bm{\mathcal{F}}_1)\rVert^2_F \leq \frac{1}{2n_2}\lVert \hat{\bm{C}}-h_{\theta_{\text{ch}}}^*(\bm{\mathcal{F}}_1)\rVert^2_F
\label{eq:d_1}
\end{equation}
Let \(\bm{\kappa}\) denote the difference \(\hat{\bm{C}}-h_{\theta_{\text{ch}}}^*(\bm{\mathcal{F}}_1)\). Equation (\ref{eq:d_1}) can then be equivalently written as:
\begin{align}
&\lVert h_{\theta_{\text{ch}}}^*(\bm{\mathcal{F}}_1)-h_{\theta_{\text{ch}}}(\bm{\mathcal{F}}_1)\rVert^2_F \leq 2\left\langle\bm{\kappa}, h_{\theta_{\text{ch}}}^*(\bm{\mathcal{F}}_1) - h_{\theta_{\text{ch}}}(\bm{\mathcal{F}}_1)\right\rangle \nonumber \notag \\
&\leq 2\lVert P_{h_{\theta_{\text{ch}}}(\bm{\mathcal{F}}_1)}\bm{\kappa}\rVert_F \lVert h_{\theta_{\text{ch}}}^*(\bm{\mathcal{F}}_1)- h_{\theta_{\text{ch}}}(\bm{\mathcal{F}}_1)\rVert_F. \notag \\
&\text{Therefore,}\notag \\
&\lVert h_{\theta_{\text{ch}}}^*(\bm{\mathcal{F}}_1)-h_{\theta_{\text{ch}}}(\bm{\mathcal{F}}_1)\rVert_F \leq 2\lVert P_{h_{\theta_{\text{ch}}}(\bm{\mathcal{F}}_1)}\bm{\kappa}\rVert_F
\label{eq:d_2}
\end{align}

Suppose column of $\bm{\kappa}$ is i.i.d sampled from $N(0,\bm{\Sigma}_{rr})$ with probability at least $1-\delta$,  
\begin{align}
&\lVert P_{h_{\theta_{\text{ch}}}(\bm{\mathcal{F}}_1)}\bm{\kappa}\rVert^2_F \lesssim \mathrm{Tr}(\bm{\Sigma}_{\bm{\kappa}}) + \lvert \hat{\bm{C}} \rvert +\log(\frac{\dim(\bm{\kappa})}{\delta}) 
\label{eq:d_3}
\end{align}
By combining the results of Equation (\ref{eq:d_2}) and Equation (\ref{eq:d_3}), 
\begin{align}
&\lVert h_{\theta_{\text{ch}}}^*(\bm{\mathcal{F}}_1)-h_{\theta_{\text{ch}}}(\bm{\mathcal{F}}_1)\rVert_F \lesssim \notag \\
&\sqrt{\mathrm{Tr}(\bm{\Sigma}_{\hat{\bm{C}}\hat{\bm{C}}\mid \bm{\mathcal{F}}_1})(\lvert \hat{\bm{C}} \rvert +\log(\frac{\dim(\bm{\kappa})}{\delta}))}
\end{align}
Further assume $n_2 >> \rho^4(\lvert \hat{\bm{C}} \rvert)+\log(\frac{\dim(\bm{\kappa})}{\delta})$, with \\
$\mathbb{E}[h_{\theta_{\text{ch}}}(\bm{\mathcal{F}}_1)h_{\theta_{\text{ch}}}(\bm{\mathcal{F}}_1)^{T}]$ being a $\rho^2-subgaussian$ random variable, will give the approximation bound of $\tilde{O}(\lvert \hat{\bm{C}} \rvert)$, where $\tilde{O}$ notation is used to hide log factors. 
\end{proof}

\end{document}